\newcommand{\Dset}{\mathcal{D}}
\newcommand{\iid}{\emph{i.i.d.}}
\newcommand{\F}{{\mathcal F}}
\newcommand{\G}{{\mathcal G}}
\renewcommand{\L}{{\mathcal L}}
\newcommand{\R}{\mathbb{R}}
\newcommand{\eps}{\varepsilon}
\DeclareMathOperator*{\argmin}{argmin}
\newcommand{\citep}{\cite}
\newcommand{\citet}{\cite}
\newtheorem{theorem}{Theorem}
\newtheorem*{theorem*}{Theorem}
\theoremstyle{plain}
\newtheorem{lemma}{Lemma}
\newtheorem{proposition}{Proposition}
\begin{document} 

\title{Multi-criteria Similarity-based Anomaly Detection using Pareto Depth Analysis}

\author{Ko-Jen Hsiao, Kevin S.~Xu, \IEEEmembership{Member,~IEEE}, Jeff Calder, 
	and~Alfred O.~Hero III, \IEEEmembership{Fellow,~IEEE}
\thanks{This work was partially supported by ARO grant W911NF-12-1-0443 and W911NF-09-1-0310. The paper is submitted to IEEE TNNLS Special Issue on Learning in Non-(geo)metric Spaces for review on October 28, 2013, revised on July 26, 2015 and accepted on July 30, 2015.
A preliminary version of this work is reported in the conference publication \citep{hsiao2012}. 

K.-J.~Hsiao is with the WhisperText, 69 Windward Avenue, Venice, CA 90291 (email: coolmark@umich.edu).

K.~S.~Xu is with the Department of Electrical Engineering and Computer Science, University of Toledo, Toledo, OH 43606, USA (email: kevinxu@outlook.com).

J.~Calder is with the Department of Mathematics, University of California, Berkeley, CA 94720, USA (email: jcalder@berkeley.edu).

A.~O.~Hero III is with the Department of Electrical Engineering and Computer Science, University of Michigan, Ann Arbor, MI 48109, USA (email: hero@umich.edu).

This work was done while the first three authors were at the University of Michigan.
}}
\maketitle

\begin{abstract}

We consider the problem of identifying patterns in a data set that exhibit 
anomalous behavior, often referred to as anomaly detection. 
\emph{Similarity-based} anomaly detection algorithms detect abnormally large 
amounts of similarity or dissimilarity, e.g.~as measured by nearest neighbor 
Euclidean distances between a test sample and the training samples.  
In many application domains there may not exist a single 
dissimilarity measure that captures all possible anomalous patterns. 
In such cases, multiple dissimilarity measures can be defined, including 
non-metric measures, and one can test for anomalies by scalarizing 
using a non-negative linear combination of them.
If the relative importance of the different dissimilarity measures are not 
known in advance, as in many anomaly detection applications, the anomaly 
detection algorithm may need to be executed multiple times with 
different choices of weights in the linear combination. 
In this paper, we propose a method for similarity-based anomaly detection 
using a novel \emph{multi-criteria} dissimilarity measure, the Pareto depth. 
The proposed \emph{Pareto depth analysis} (PDA) anomaly detection algorithm  
uses the concept of Pareto optimality to detect anomalies under 
multiple criteria without having to 
run an algorithm multiple times with different choices of weights. 
The proposed PDA approach is provably better than using linear combinations 
of the criteria and shows superior performance on experiments with synthetic 
and real data sets. 

\begin{keywords}
multi-criteria dissimilarity measure, similarity-based learning,
combining dissimilarities, Pareto front, scalarization gap, partial correlation
\end{keywords}

\end{abstract}

\section{Introduction}
\label{intro}
Identifying patterns of anomalous behavior in a data set, often referred to as 
anomaly detection, is an important problem with diverse applications 
including intrusion detection in computer networks, detection of credit card 
fraud, and medical informatics \citep{Hodge2004,Chandola2009}. 
\emph{Similarity-based} approaches to anomaly detection have generated much interest 
\cite{Byers1998,Angiulli2002,Eskin2002,Breunig2000,Zhao2009,Hero2006,Sricharan2011} 
due to their relative simplicity and 
robustness as compared to model-based, cluster-based, and density-based 
approaches \citep{Hodge2004,Chandola2009}. 
These approaches typically involve the calculation of similarities or 
dissimilarities 
between data samples using a single dissimilarity criterion, such as 
Euclidean distance. 
Examples include approaches based on $k$-nearest neighbor ($k$-NN) distances 
\cite{Byers1998,Angiulli2002,Eskin2002}, local neighborhood densities 
\cite{Breunig2000}, local p-value estimates \cite{Zhao2009}, and geometric 
entropy minimization \cite{Hero2006,Sricharan2011}. 

In many application domains, such as those involving categorical data, 
it may not be possible or practical to represent 
data samples in a geometric space in order to compute Euclidean distances. 
Furthermore, \emph{multiple} dissimilarity measures 
corresponding to \emph{different criteria} 
may be required to detect certain types of anomalies. 
For example, consider the problem of detecting anomalous object trajectories 
in video sequences of different lengths. 
Multiple criteria, such as dissimilarities in object speeds or trajectory 
shapes, can be used to detect a greater range of anomalies than any single 
criterion. 

In order to perform anomaly detection using these multiple criteria, one 
could first combine the dissimilarities for each criterion using a non-negative linear combination then apply a (single-criterion) anomaly detection algorithm. 
However, in many applications, the importance of the different criteria are 
not known in advance. 
It is thus difficult to determine how much weight to assign to each criterion, so one may have to run the anomaly detection algorithm multiple times using different weights selected by a grid search or similar method. 

We propose a novel 
\emph{multi-criteria} approach for similarity-based anomaly detection using 
\emph{Pareto depth analysis} (PDA). 
PDA uses the concept of Pareto optimality, which is the typical method for defining optimality when there may 
be multiple conflicting criteria for comparing items. 
An item is said to be Pareto-optimal if there does not exist another item that 
is better or equal in \emph{all} of the criteria. 
An item that is Pareto-optimal is optimal in the usual sense 
under some (not necessarily linear) combination of the criteria. 
Hence PDA is able to detect anomalies under multiple 
combinations of the criteria without explicitly forming these 
combinations. 

\begin{figure*}[t]
\begin{center}
  \subfloat[]{\includegraphics[width=5.5 cm]{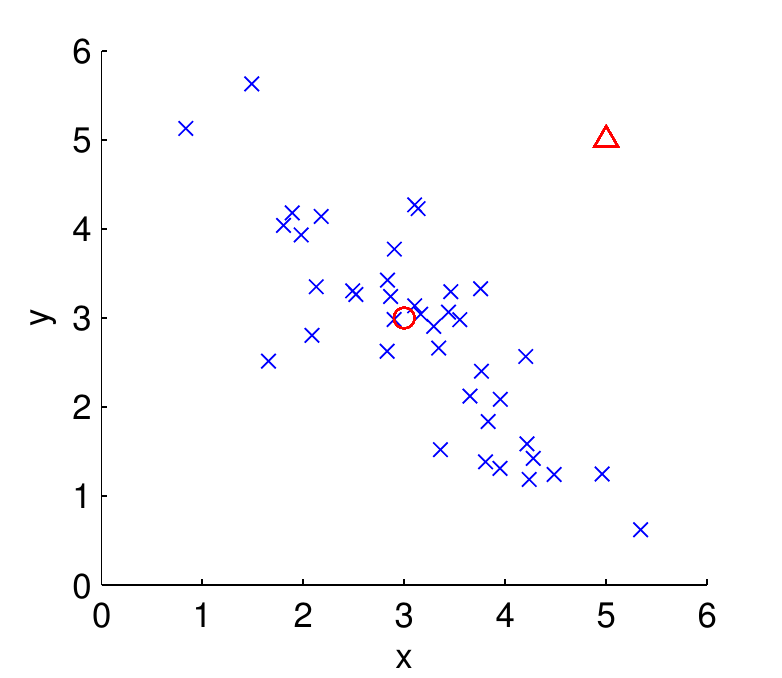}}
\quad
  \subfloat[]{\includegraphics[width=5.5 cm]{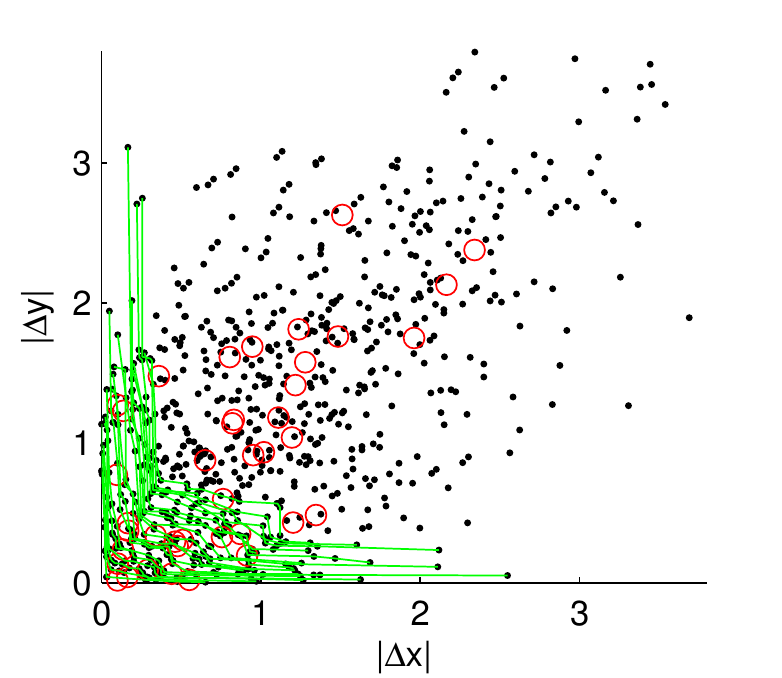}}
\quad
  \subfloat[]{\includegraphics[width=5.5 cm]{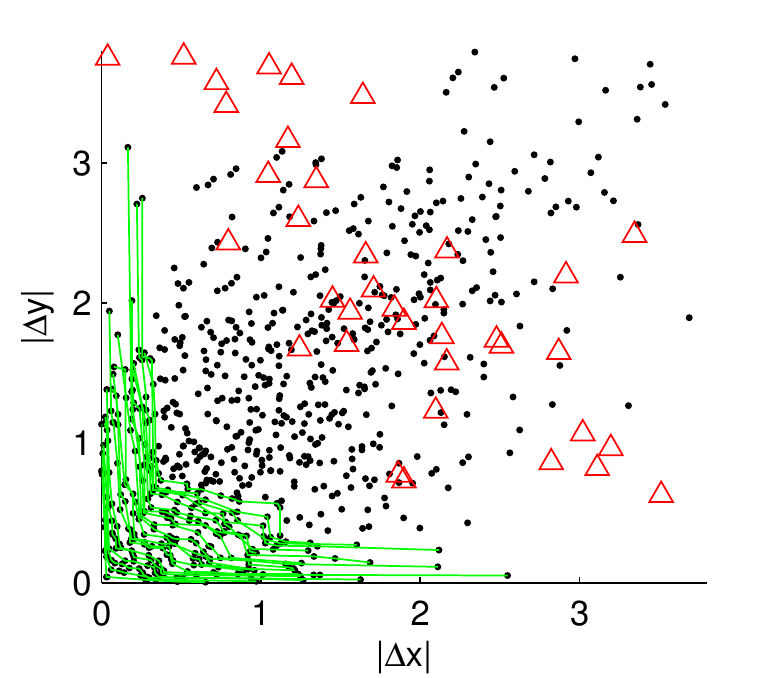}}
  \caption{(a) Illustrative example with $40$ training samples 
  (blue x's) and $2$ test samples (red circle and triangle) in $\R^2$. 
  (b) Dyads for the training samples (black dots) along with first 
  $20$  Pareto fronts (green lines) under two criteria: $|\Delta x|$ and 
  $|\Delta y|$. 
  The Pareto fronts induce a partial ordering on the set of dyads. 
  Dyads associated with the test sample marked by the red circle concentrate around 
  \emph{shallow fronts} (near the lower left of the figure). 
  (c) Dyads associated with the test sample marked by the red triangle 
  concentrate around \emph{deep fronts}.}
  \label{bi_sample}
  \end{center}
\end{figure*}

The PDA approach involves creating \emph{dyads} corresponding to 
dissimilarities 
between pairs of data samples under all of the criteria. 
Sets of Pareto-optimal dyads, called \emph{Pareto fronts}, are then computed. The first Pareto front (depth one) is the set of non-dominated dyads. The second Pareto front (depth two) is obtained by removing these non-dominated dyads, i.e.~peeling off the first front, and recomputing the first Pareto front of those remaining. This process continues until no dyads remain. In this way, each dyad is assigned to a Pareto front at some depth (see  Fig.~\ref{bi_sample} for illustration).

The \emph{Pareto depth} of a dyad is a novel measure of dissimilarity between a 
pair of data samples under multiple criteria. 
In an unsupervised anomaly detection setting, the majority of the training samples are assumed to be nominal. 
Thus a nominal test sample would likely be similar to many training samples under some criteria, so most dyads for the nominal test sample would appear in shallow Pareto fronts.
On the other hand, an anomalous test sample would likely be dissimilar to many training samples under many criteria, so most dyads for the anomalous test sample would be located in deep Pareto fronts. 
Thus computing the Pareto depths of the dyads corresponding to a test 
sample can discriminate between nominal and anomalous 
samples. 

Under the assumption that the multi-criteria dyads can be modeled as realizations from a $K$-dimensional density, we provide a mathematical analysis of properties of the first Pareto front relevant to anomaly detection.  In particular, in the \nameref{thm:Gap} we prove upper and lower bounds on the degree to which the Pareto fronts are non-convex.  For any algorithm using non-negative linear combinations of criteria, non-convexities in the Pareto fronts contribute to an artificially inflated anomaly score, resulting in an increased false positive rate.  Thus our analysis shows in a precise sense that PDA can outperform any algorithm that uses a non-negative linear combination of the criteria.
Furthermore, this theoretical prediction is experimentally validated by comparing PDA to several single-criterion similarity-based anomaly detection algorithms in two experiments involving both synthetic and 
real data sets.

The rest of this paper is organized as follows. 
We discuss related work in Section \ref{related}. 
In Section \ref{prealg} we provide an introduction to Pareto fronts and present a 
theoretical analysis of the properties of the first Pareto front. 
Section \ref{detection} relates Pareto fronts to the multi-criteria 
anomaly detection problem, which leads to the PDA anomaly detection algorithm. Finally we present three experiments in Section \ref{exp} to provide experimental support for our theoretical results and evaluate the 
performance of PDA for anomaly detection. 

\section{Related work}
\label{related}

\subsection{Multi-criteria methods for machine learning}
Several machine learning methods utilizing Pareto optimality have previously 
been proposed; an overview can be found in \citep{Jin2008}. 
These methods typically formulate supervised machine learning problems as 
multi-objective optimization problems over a potentially infinite set of candidate items where finding even the first Pareto 
front is quite difficult, often requiring multi-objective evolutionary algorithms. 
These methods differ from our use of Pareto optimality because we consider  Pareto fronts created from 
a finite set of items, so 
we do not need to employ sophisticated algorithms in order to find these 
fronts. 
Rather, we utilize Pareto fronts to form a statistical criterion for anomaly detection.

Finding the Pareto front of a finite set of items has also been referred to in 
the literature as the skyline query \citep{borzsony2001skyline,tan2001efficient} or the maximal vector problem \citep{kung1975finding}. 
Research on skyline queries has focused on how to efficiently compute or approximate items on the first Pareto front and efficiently store the results in memory.
Algorithms for skyline queries can be used in the proposed PDA approach for computing Pareto fronts.
Our work differs from skyline queries because the focus of PDA is the utilization of \emph{multiple} Pareto fronts for the purpose of multi-criteria anomaly detection, not the efficient computation or approximation of the first Pareto front. 

Hero and Fleury \citet{hero2004pareto} introduced a method for gene ranking using multiple Pareto 
fronts that is related to our approach. 
The method ranks genes, in order of interest to a 
biologist, by creating Pareto fronts on the data samples, i.e.~the genes. 
In this paper, we consider Pareto fronts of \emph{dyads}, which correspond to 
dissimilarities between \emph{pairs} of data samples under multiple criteria  rather than the samples 
themselves, and use the distribution 
of dyads in Pareto fronts to perform multi-criteria anomaly detection rather than gene ranking.  

Another related area is multi-view learning \citep{Blum1998,Sindhwani2005}, 
which involves learning from data represented by multiple sets of features, 
commonly referred to as ``views''. 
In such a case, training in one view is assumed to help to improve learning in another view. 
The problem of view disagreement, where samples take on different classes in 
different views, has recently been investigated \citep{Christoudias2008}. 
The views are similar to criteria in our problem setting. 
However, in our setting, different criteria may be orthogonal and could even 
give contradictory information; hence there may be severe view disagreement. 
Thus training in one view could actually worsen performance 
in another view, so the problem we consider differs from multi-view learning. 
A similar area is that of multiple kernel learning \citep{Gonen2011}, 
which is typically 
applied to supervised learning problems, unlike the unsupervised anomaly 
detection setting we consider.

\subsection{Anomaly detection}
\label{relAnomaly}
Many methods for anomaly detection have previously been proposed.
Hodge and Austin \citet{Hodge2004} and Chandola et al.~\citet{Chandola2009} both provide extensive surveys of 
different anomaly detection methods and applications. 

This paper focuses on the similarity-based approach to anomaly detection, also 
known as instance-based learning. 
This approach typically involves transforming similarities between a test 
sample and training samples into an anomaly score. 
Byers and Raftery \citet{Byers1998} proposed to use the distance between a sample and its 
$k$th-nearest neighbor as the anomaly score for the sample; similarly, 
Angiulli and Pizzuti \citet{Angiulli2002} and Eskin et al.~\citet{Eskin2002} proposed to the use the sum of the 
distances between a sample and its $k$ nearest neighbors. 
Breunig et al.~\citet{Breunig2000} used an anomaly score based on the local density of the 
$k$ nearest neighbors of a sample. 
Hero \citet{Hero2006} and Sricharan and Hero \citet{Sricharan2011} introduced non-parametric adaptive anomaly detection methods using geometric entropy minimization, based on random $k$-point 
minimal spanning trees and bipartite $k$-nearest neighbor ($k$-NN) graphs, respectively. 
Zhao and Saligrama \citet{Zhao2009} proposed an anomaly detection algorithm k-LPE using local 
p-value estimation (LPE) based on a $k$-NN graph. 
The aforementioned anomaly detection methods only depend on the data through the pairs of data points (dyads) that define the edges in the $k$-NN graphs.
These methods are designed for 
a single criterion, unlike the PDA anomaly detection 
algorithm that we propose in this paper, which accommodates dissimilarities corresponding to multiple criteria. 

Other related approaches for anomaly detection include $1$-class support vector 
machines (SVMs) \citep{Scholkopf2001}, where an SVM classifier is trained given 
only samples from a single class, and tree-based methods, where 
the anomaly score of a data sample is determined by its depth in a tree or 
ensemble of trees. 
Isolation forest \cite{Liu2008} and SCiForest \cite{Liu2010} are two tree-based 
approaches, targeted at detecting isolated and clustered anomalies, 
respectively, using depths of samples in an ensemble of trees. 
Such tree-based approaches utilize depths to form anomaly scores, 
similar to PDA; however, they operate on feature representations of the data 
rather than on dissimilarity representations. 
Developing multi-criteria extensions of such non-similarity-based methods is 
beyond the scope of this paper and would be worthwhile future work.

\section{Pareto fronts and their properties}
\label{prealg}

Multi-criteria optimization and Pareto fronts have been studied in many application areas in computer science, economics and the social sciences. An overview can be found in  \citep{ehrgott2005}. The proposed PDA method in this paper utilizes the notion of Pareto optimality, which we now introduce.

\subsection{Motivation for Pareto optimality}

Consider the following problem: given $n$ items, denoted by the set $\mathcal{S}$, 
and $d$ criteria for evaluating each item, denoted by functions $f_1, \ldots, 
f_d$, select $x \in \mathcal{S}$ that minimizes $[f_1(x), \ldots, f_d(x)]$. 
In most settings, it is not possible to find a single item $x$ which simultaneously minimizes $f_i(x)$ for all $i \in \{1, \ldots, d\}$. 
Many approaches to the multi-criteria optimization problem reduce to combining all of the criteria into a single criterion, a process often referred to as \emph{scalarization} \citep{ehrgott2005}. A common approach is to use a non-negative linear combination of the $f_i$'s and find the item that minimizes the linear combination. Different choices of weights in the linear combination yield different minimizers. In this case, one would need to identify a set of optimal solutions corresponding to different weights using, for example, a grid search over the weights.

A more robust and powerful approach involves identifying the set of \emph{Pareto-optimal} items. 
An item $x$ is said to \emph{strictly dominate} another item $x^*$ if 
$x$ is no greater than $x^*$ in each criterion and 
$x$ is less than $x^*$ in at least one criterion. 
This relation can be written as $x\succ x^*$ if $f_i(x) \leq f_i(x^*)$ for 
\emph{each} $i$ and $f_i(x) < f_i(x^*)$ for \emph{some} $i$. 
The set of Pareto-optimal items, called the \emph{Pareto front}, is the 
set of items in $\mathcal{S}$ that are not strictly dominated by another item 
in $\mathcal{S}$. 
It contains all of the minimizers that are found using non-negative linear combinations, 
but also includes other items that cannot be found by linear 
combinations.
Denote the Pareto front by $\F_1$, which we call the first Pareto front. 
The second Pareto front can be constructed by finding items that are 
not strictly dominated by any of the remaining items, which are members of 
the set $\mathcal{S} \setminus \F_1$. 
More generally, define the $i$th Pareto front by
$$\F_i=\text{Pareto front of the set }
\mathcal{S} \setminus \left(\bigcup_{j=1}^{i-1}\F_j\right).$$
For convenience, we say that a Pareto front $\F_i$ is \emph{deeper} 
than $\F_j$ if $i>j$. 

\subsection{Mathematical properties of Pareto fronts}
\label{theory}

The distribution of the number of points on the first Pareto front was first studied by Barndorff-Nielsen and Sobel \citet{nielsen1966}.  The problem has garnered much attention since. Bai et al.~\citet{bai2005} and Hwang and Tsai\citet{hwang2010} provide good surveys of recent results.  We will be concerned here with properties of the first Pareto front that are relevant to the PDA anomaly detection algorithm and have not yet been considered in the literature.  

Let $Y_1,\dots,Y_n$ be independent and identically distributed~(\iid)~on $\R^d$ with density function $f: \R^d \to \R$, and let $\F^n$ denote the first Pareto front of $Y_1,\dots,Y_n$.
In the general multi-criteria optimization framework, the points $Y_1,\dots,Y_n$ are the images in $\R^d$ of $n$ feasible solutions to some optimization problem under a vector of objective functions of length $d$. 
In the context of multi-criteria anomaly detection, each point $Y_i$ is a dyad corresponding to dissimilarities between two data samples under multiple criteria, and $d=K$ is the number of criteria.  

A common approach in multi-objective optimization is linear scalarization \citep{ehrgott2005}, which constructs a new single criterion as a non-negative linear combination of the $d$ criteria.  It is well-known, and easy to see, that linear scalarization will only identify Pareto-optimal points on the boundary of the convex hull of 
\[\G^n := \bigcup_{x \in \F^n} (x + \R^d_+),\]
where $\R^d_+ = \{ x \in \R^d \ \ | \ \  \forall i, \ x_i\geq 0\}$.
Although this is a common motivation for Pareto optimization methods, there are, to the best of our knowledge, no results in the literature regarding how many points on the Pareto front are missed by scalarization. We present such a result in this section, namely the \nameref{thm:Gap}.  

We define
\[\L^n = \bigcup_{\alpha \in \R^d_+} \argmin_{x \in S_n} \left\{\sum_{i=1}^d \alpha_i x_i \right\}, \ \ S_n =\{Y_1,\dots,Y_n\}. \]
The subset $\L^n \subset \F^n$ contains all Pareto-optimal points that can be obtained by some selection of of non-negative weights for linear scalarization.  Let $K_n$ denote the cardinality of $\F^n$, and let $L_n$ denote the cardinality of $\L^n$.  When $Y_1,\dots,Y_n$ are  uniformly distributed on the unit hypercube, Barndorff-Nielsen and Sobel \citet{nielsen1966} showed that
\[E(K_n) = \frac{n}{(d-1)!} \int_0^1 (1-x)^{n-1} (-\log x)^{d-1} \, dx,\]
from which one can easily obtain the asymptotics
\[E(K_n) = \frac{(\log n)^{d-1}}{(d-1)!} + O( (\log n)^{d-2}).\]
Many more recent works have studied the variance of $K_n$ and have proven central limit theorems for $K_n$.  All of these works assume that $Y_1,\dots,Y_n$ are uniformly distributed on $[0,1]^d$.    For a summary, see \citet{bai2005} and \citet{hwang2010}.  Other works have studied $K_n$ for more general distributions on domains that have smooth ``non-horizontal'' boundaries near the Pareto front \citep{baryshnikov2005} and for multivariate normal distributions on $\R^d$ \citep{ivanin1975}.  The ``non-horizontal'' condition excludes hypercubes.  

To the best of our knowledge there are no results on the asymptotics of $K_n$ for non-uniformly distributed points on the unit hypercube.  This is of great importance as it is impractical in multi-criteria optimization (or anomaly detection) to assume that the coordinates of the points are independent.  Typically the coordinates of $Y_i \in \R^d$ are the images of the \emph{same} feasible solution under several \emph{different} criteria, which will not in general be independent.
 
Here we develop results on the size of the gap between the number of items $L_n$ discoverable by scalarization compared to the number of items $K_n$ discovered on the Pareto front. The larger the gap, the more suboptimal  scalarization is relative to Pareto optimization.
Since $x \in \L^n$ if and only if $x$ is on the boundary of the convex hull of $\G^n$, the size of $\L^n$ is related to the convexity (or lack thereof) of the Pareto front.  There are several ways in which the Pareto front can be \emph{non-convex}.  

First, suppose that $Y_1,\dots,Y_n$ are distributed on some domain $\Omega \subset \R^d$ with a continuous density function $f:\overline{\Omega} \to \R$ that is strictly positive on $\overline{\Omega}$.  Let $T\subset \partial \Omega$ be a portion of the boundary of $\Omega$ such that 
\[\inf_{z\in T} \min(\nu_1(z),\dots,\nu_d(z)) > 0,\]
and
\[\{y \in \overline{\Omega} \, : \, \forall i \ \ y_i \leq x_i\} = \{x\}, \ \ {\rm for \ all} \ \ x \in T,\]
where $\nu:\partial \Omega \to \R^d$ is the unit inward normal to $\partial \Omega$.  
  The conditions on $T$ guarantee that a portion of the first Pareto front will concentrate near $T$ as $n\to \infty$.  If we suppose that $T$ is contained in the interior of the convex hull of $\Omega$, then points on the portion of the Pareto front near $T$ cannot be obtained by linear scalarization, as they are on a non-convex portion of the front.  Such non-convexities are a direct result of the geometry of the domain $\Omega$ and are depicted in Fig.~\ref{geometry}.  In a preliminary version of this work, we studied the expectation of the number of points on the Pareto front within a neighborhood of $T$ (Theorem 1 in \citep{hsiao2012}).  As a result, we showed that 
\[E(K_n - L_n) \geq \gamma n^{\frac{d-1}{d}} + O(n^\frac{d-2}{d}), \]
as $n\to \infty$, where $\gamma$ is a positive constant given by
\[ \gamma = \frac{1}{d}(d!)^{\frac{1}{d}}\Gamma\left(\frac{1}{d}\right)\int_T f(z)^{\frac{d-1}{d}} (\nu_1(z)\cdots \nu_d(z))^{\frac{1}{d}}  dz.\]
It has recently come to our attention that a stronger result was proven previously by Baryshnikov and Yukich \citet{baryshnikov2005} in an unpublished manuscript.

\begin{figure}[t]
\centering
\subfloat[]{
  \includegraphics[height=0.18\textheight]{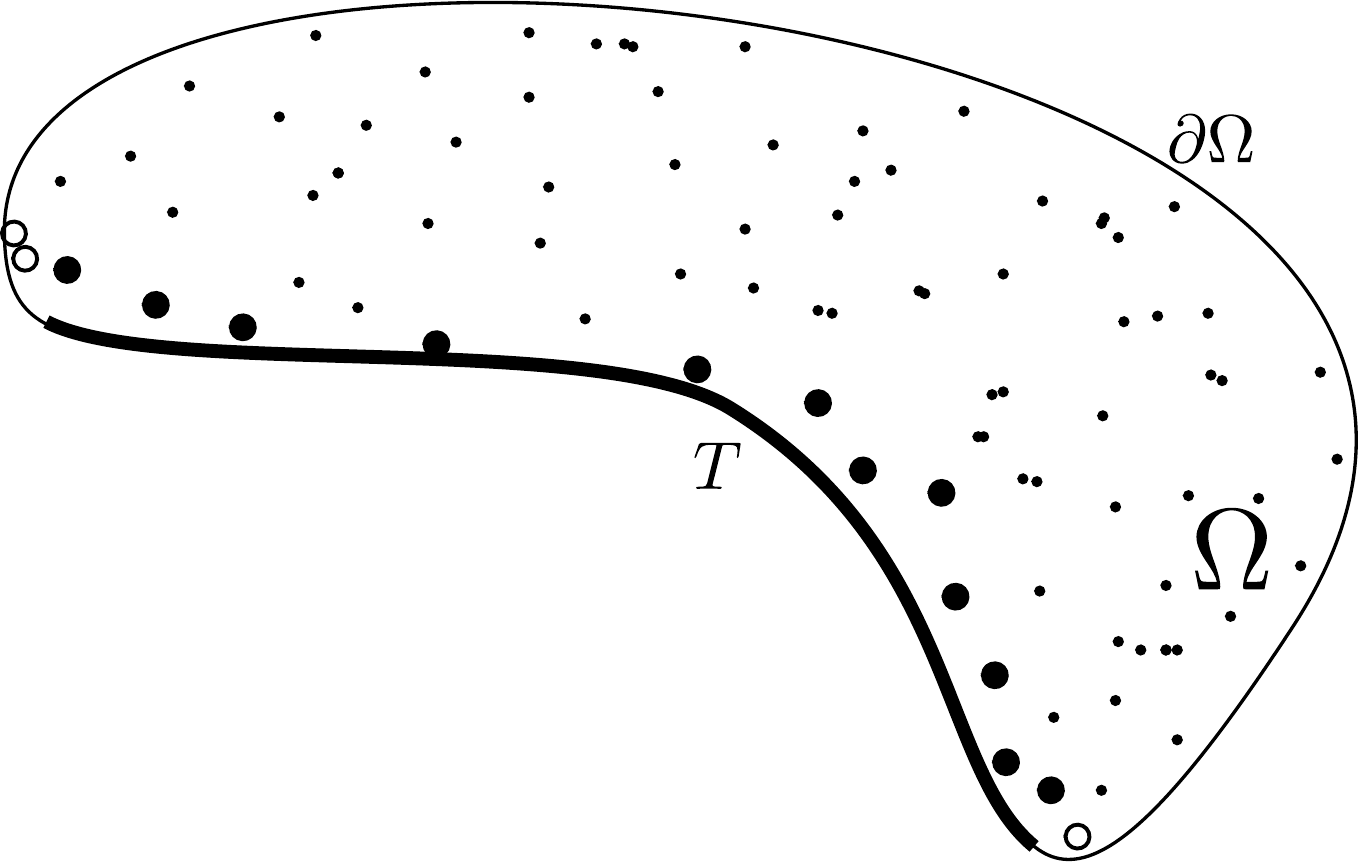} 
\label{geometry}
}
\quad
\subfloat[]{
  \includegraphics[trim = 25 17 25 25, clip=true,height=0.18\textheight]{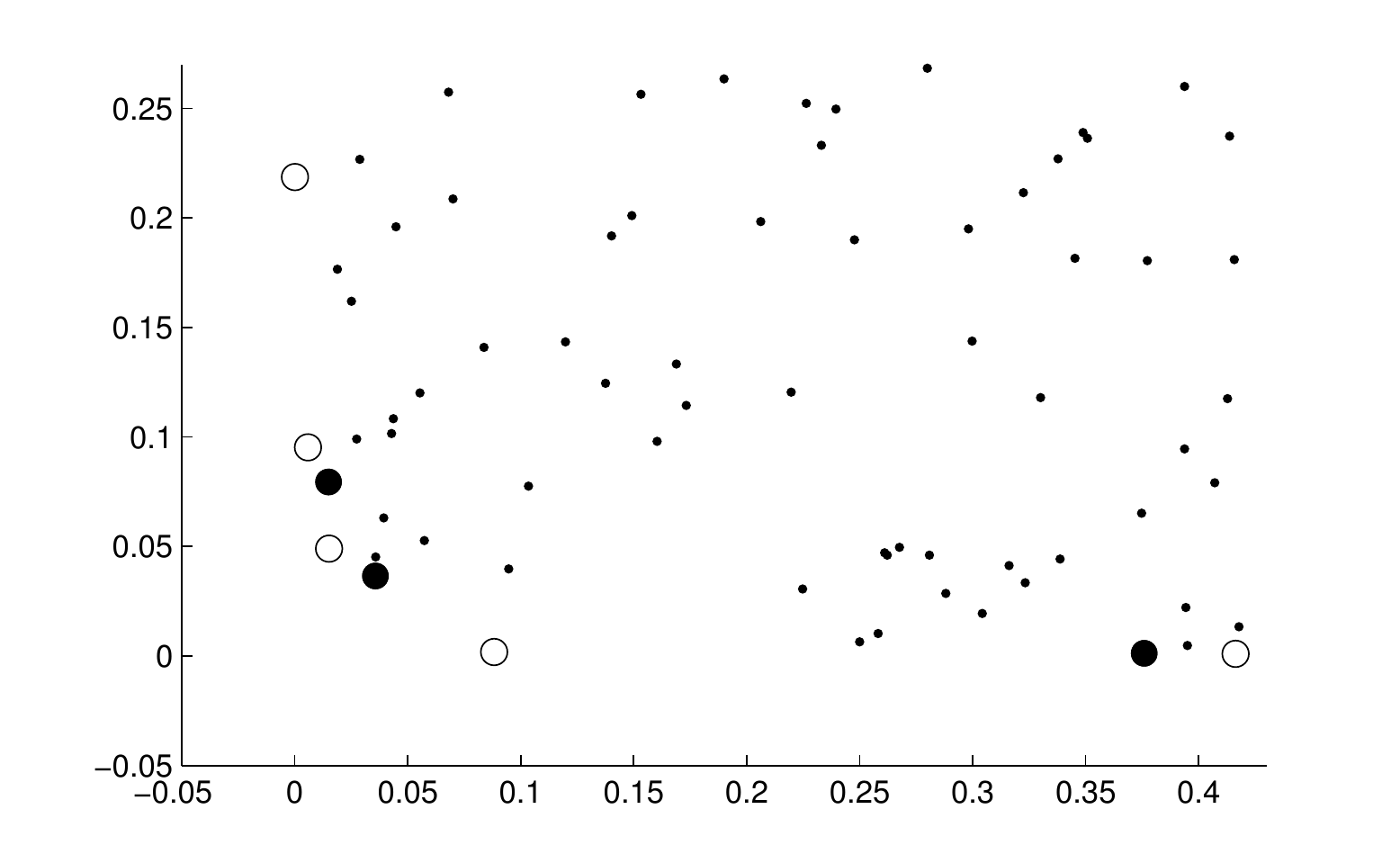}
\label{randomness}
}
  \caption[]{\subref{geometry} Non-convexities in the Pareto front induced by the geometry of the domain $\Omega$. \subref{randomness} Non-convexities due to randomness in the points. In each case, the larger points are Pareto-optimal, and the large black points \emph{cannot} be obtained by scalarization.  }
  \label{fig:theory}
\end{figure}

In practice, it is unlikely that one would have enough information about $f$ or $\Omega$ to compute the constant $\gamma$.  In this paper, we instead study a second type of non-convexity in the Pareto front.  These non-convexities are strictly due to randomness in the positions of the points and occur even when the domain $\Omega$ is convex (see Fig.~\ref{randomness} for a depiction of such non-convexities).  In the following, we assume that $Y_1,\dots,Y_n$ are \iid~on the unit hypercube $[0,1]^d$ with a bounded density function $f:[0,1]^d \to \R^d$ which is continuous at the origin and strictly positive on $[0,1]^d$.   Under these assumptions on $f$, it turns out that the asymptotics of $E(K_n)$ and $E(L_n)$ are independent of $f$.  Hence our results are applicable to a wide range of problems without the need to know detailed information about the density $f$.

Our first result provides asymptotics on $K_n$, the size of the first Pareto front. 
\begin{theorem}\label{thm:pareto-asymp}
Assume $f:[0,1]^d \to [\sigma,M]$ is continuous at the origin, and $0 < \sigma < M < \infty$.  Then 
\begin{equation*}\label{eq:arb-density-asymp}
E(K_n) \thicksim c_{n,d}:=\frac{(\log n)^{d-1}}{(d-1)!} {\rm \ as \ } n \to \infty.
\end{equation*}
\end{theorem}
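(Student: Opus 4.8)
The plan is to start from an exact integral representation for $E(K_n)$ and then compare the general density $f$ to the uniform case, for which the asymptotics are already known via the Barndorff--Nielsen--Sobel (BNS) formula quoted above. By exchangeability, $E(K_n) = n\,\mathbb{P}(Y_1 \in \F^n)$, and $Y_1$ fails to lie on the first front exactly when some other point strictly dominates it, i.e.\ lands in the dominating box $R(y) = \{z : z_i \le y_i \ \forall i\}$. Since ties have probability zero, conditioning on $Y_1 = y$ gives
\[ E(K_n) = n\int_{[0,1]^d} f(y)\,\bigl(1-F(y)\bigr)^{n-1}\,dy, \qquad F(y) = \int_{R(y)} f(z)\,dz. \]
For $f\equiv 1$ this reduces to the BNS integral, so the goal is to show the general $f$ produces the same leading constant $\frac{1}{(d-1)!}$.

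The key structural observation is that the integrand concentrates where $F(y) = O(1/n)$, and, crucially, for $y$ in a corner cube $[0,\delta]^d$ the dominating box $R(y)$ stays inside $[0,\delta]^d$. Hence I would localize: fix $\eps>0$ and use continuity of $f$ at the origin to choose $\delta$ with $\rho-\eps \le f \le \rho+\eps$ on $[0,\delta]^d$, where $\rho := f(0) \in [\sigma,M]$. On this cube $(\rho-\eps)\prod_i y_i \le F(y) \le (\rho+\eps)\prod_i y_i$, so the corner contribution lies between $n(\rho-\eps)\int_{[0,\delta]^d}\bigl(1-(\rho+\eps)\prod_i y_i\bigr)^{n-1}\,dy$ and $n(\rho+\eps)\int_{[0,\delta]^d}\bigl(1-(\rho-\eps)\prod_i y_i\bigr)^{n-1}\,dy$. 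A change of variables $y_i=\delta w_i$ followed by $v=c\delta^d\prod_i w_i$ reduces each bound to the BNS integral, the substitution producing a factor $1/c$ (with $c=\rho\pm\eps$) together with an additive $\log(c\delta^d)$ shift inside the $(-\log v)^{d-1}$ factor. Since $n\int_0^b(1-v)^{n-1}(-\log v)^{d-1}\,dv\sim(\log n)^{d-1}$ for any fixed $b\in(0,1]$ and the shift only perturbs lower-order $(\log n)^{d-2}$ terms, the front factor and the $1/c$ combine so that the corner contribution divided by $c_{n,d}$ lies asymptotically between $\frac{\rho-\eps}{\rho+\eps}$ and $\frac{\rho+\eps}{\rho-\eps}$, both tending to $1$ as $\eps\to0$.

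The main obstacle, and the step requiring the most care, is bounding the complementary region $B = [0,1]^d\setminus[0,\delta]^d$, where the local approximation fails and where one must instead exploit only the global bounds $\sigma\le f\le M$. I would show this contributes at lower order, $O((\log n)^{d-2}) = o(c_{n,d})$, by a dimension-reduction argument: on the piece of $B$ where coordinate $i_0$ exceeds $\delta$ one has $F(y) \ge \sigma\delta \prod_{j\ne i_0} y_j$, so using $(1-F)^{n-1} \le e^{-(n-1)F}$ and $f\le M$, integrating out $y_{i_0}$ and applying the BNS asymptotics in dimension $d-1$ yields an $O((\log n)^{d-2})$ bound; summing over $i_0$ and the (even smaller) overlaps where several coordinates are large controls all of $B$. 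The remaining work is routine bookkeeping of the subleading terms.

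Finally, combining the corner estimate with the $B$-estimate gives, for each fixed $\eps>0$, $\frac{\rho-\eps}{\rho+\eps} \le \liminf_n \frac{E(K_n)}{c_{n,d}} \le \limsup_n \frac{E(K_n)}{c_{n,d}} \le \frac{\rho+\eps}{\rho-\eps}$, and letting $\eps\to 0$ forces the limit to equal $1$, which is the claim. The feature that makes continuity \emph{only at the origin} sufficient is precisely that the $(\log n)^{d-1}$ coefficient is generated by the logarithmic divergence as the relevant scale shrinks to $0$, where $f\approx\rho$; the behaviour of $f$ elsewhere, together with the multiplicative constant $\rho$ itself, enters only through lower-order terms.
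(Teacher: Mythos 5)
Your proposal is correct and follows essentially the same route as the paper: the exact representation $E(K_n)=n\int_{[0,1]^d}f(x)(1-F(x))^{n-1}\,dx$ (the paper's Proposition \ref{prop:num_pts}), localization to a corner cube $[0,\delta]^d$ where continuity at the origin gives the sandwich $(f(0)-\eps)\prod_i x_i\leq F(x)\leq(f(0)+\eps)\prod_i x_i$, reduction to the one-dimensional Barndorff--Nielsen--Sobel log-integral by the change of variables $t=x_1\cdots x_d$ (the paper's Proposition \ref{prop:prelim-asymp}), and the final squeeze $\frac{f(0)-\eps}{f(0)+\eps}\leq\liminf\leq\limsup\leq\frac{f(0)+\eps}{f(0)-\eps}$ with $\eps\to 0$. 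The only (immaterial) difference is your treatment of $[0,1]^d\setminus[0,\delta]^d$ by integrating out the large coordinate and invoking the $(d-1)$-dimensional asymptotics, whereas the paper uses the global bound $F(x)\geq\sigma x_1\cdots x_d$ and obtains the same $O((\log n)^{d-2})$ estimate as the difference of the full-cube and corner-cube integrals.
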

The proof of Theorem \ref{thm:pareto-asymp} is provided in the Appendix.
Our second result concerns $E(L_n)$.  We are not able to get the exact asymptotics of $E(L_n)$, so we provide upper and lower asymptotic bounds.
\begin{theorem}\label{thm:small}
Assume $f:[0,1]^d \to [\sigma,M]$ is continuous at the origin, and $0 < \sigma < M < \infty$.  Then 
\begin{equation*}\label{eq:Ln-bounds}
\textstyle\frac{d!}{d^d}c_{n,d} +o((\log n)^{d-1})\leq  E(L_n)\leq \frac{3d-1}{4d-2}c_{n,d}+o((\log n)^{d-1})
\end{equation*}
as $n\to \infty$.
\end{theorem}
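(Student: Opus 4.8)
The plan is to work from the characterization that a Pareto-optimal point $x$ lies in $\L^n$ if and only if $x$ is a vertex of the convex hull of $\G^n$, equivalently if and only if there is a non-negative weight vector $\alpha \in \R^d_+\setminus\{0\}$ for which the \emph{circumscribing simplex}
\[\Delta_\alpha(x) := \Big\{ y \in \R^d_+ : \textstyle\sum_i \alpha_i y_i < \sum_i \alpha_i x_i \Big\}\]
contains no sample point other than $x$. Every such simplex contains the box $B_x = \prod_i [0,x_i]$, so emptiness of $\Delta_\alpha(x)$ is strictly stronger than the Pareto condition ``$B_x$ empty'' that counts $K_n$; this is the source of the gap $L_n \le K_n$. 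A Lagrange computation shows that $\mathrm{vol}(\Delta_\alpha(x)) = (\sum_j \alpha_j x_j)^d / (d!\prod_i \alpha_i)$ is minimized at $\alpha_i \propto 1/x_i$, with minimal value $\frac{d^d}{d!}\prod_i x_i = \frac{d^d}{d!}\,\mathrm{vol}(B_x)$. I would then write $E(L_n) = n\int_{[0,1]^d} f(x)\,P_x\big(\exists\,\alpha : \Delta_\alpha(x)\text{ empty}\big)\,dx$ via the Campbell--Mecke formula and, reusing the localization established in the proof of Theorem~\ref{thm:pareto-asymp}, reduce to the behavior of $f$ at the corner so that $f$ may be replaced by $f(0)$ up to $o((\log n)^{d-1})$. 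The guiding principle is that, because convexity is preserved under the affine linearization of the coordinatewise log-transform that straightens the front, the limiting fraction $\rho_d$ of front points that are hull vertices is independent of position along the front; hence both bounds take the form (constant)$\,\cdot c_{n,d}$, and the theorem amounts to bracketing $\rho_d \in [\frac{d!}{d^d}, \frac{3d-1}{4d-2}]$.

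For the lower bound I would keep only the \emph{single} minimal simplex as a sufficient witness: if $\Delta_{\alpha^\ast}(x)$ with $\alpha^\ast_i = 1/x_i$ is empty, then $x \in \L^n$. Since $\mathrm{vol}(\Delta_{\alpha^\ast}(x)) = \frac{d^d}{d!}\prod_i x_i$, its emptiness probability is $(1-\mu(\Delta_{\alpha^\ast}))^{n-1} \approx \exp\!\big(-n f(0)\tfrac{d^d}{d!}\prod_i x_i\big)$, so that
\[E(L_n) \ge n\,f(0)\!\int_{[0,1]^d} \exp\!\Big(-n f(0)\tfrac{d^d}{d!}\textstyle\prod_i x_i\Big)\,dx \,+\, o\big((\log n)^{d-1}\big).\]
This is exactly the integral from the proof of Theorem~\ref{thm:pareto-asymp} with the box volume inflated by the constant factor $d^d/d!$; the factor passes through the substitution $s = n f(0)\frac{d^d}{d!}\prod_i x_i$ and pulls out a prefactor $d!/d^d$, giving $E(L_n) \ge \frac{d!}{d^d}c_{n,d} + o((\log n)^{d-1})$. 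One only has to verify that the minimal simplex of a dominant (corner-concentrated) front point stays inside $[0,1]^d$ and inside the region where $f \approx f(0)$, which holds because its axis vertices sit at $d x_i e_i$.

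The upper bound is the hard part, and the obstacle is precisely that scalarizability is a union $\bigcup_{\alpha}\{\Delta_\alpha(x)\text{ empty}\}$ over a \emph{continuum} of directions, whose probability cannot be bounded by any single simplex. My plan is to pass to the equivalent feasibility formulation and instead \emph{lower bound} the number of non-scalarizable (convex-hidden) front points. A front point $x$ fails to be a hull vertex as soon as some convex combination of other front points dominates it coordinatewise; a tractable sufficient condition is the existence of a \emph{straddling pair} $y,z \in \F^n$, incomparable to $x$ in complementary directions, with some point of the segment $[y,z]$ below $x$ in every coordinate. In the two-dimensional model this reduces cleanly: neighbors in the right strip $\{y_1>x_1, y_2<x_2\}$ and the upper strip $\{y_1<x_1, y_2>x_2\}$ impose slope constraints $m > (x_2-y_2)/(y_1-x_1)$ and $m < (y_2-x_2)/(x_1-y_1)$ respectively, and $x$ is scalarizable iff the largest lower constraint falls below the smallest upper one, and hidden otherwise. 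Carrying out the expected count of configurations that violate this feasibility (Poissonizing to make the two strips independent and passing to the stationary local process) is what yields a guaranteed fraction $\frac{d-1}{4d-2}$ of hidden points, whence $E(L_n) \le \big(1-\frac{d-1}{4d-2}\big)c_{n,d} + o((\log n)^{d-1}) = \frac{3d-1}{4d-2}c_{n,d} + o((\log n)^{d-1})$. The reason this yields only a bound rather than the exact $\rho_d$ is that the straddling event captures only part of the non-convexity; the full continuum union over $\alpha$ has no closed-form emptiness probability in general dimension.

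Finally, I would collect the standard technical steps shared with Theorem~\ref{thm:pareto-asymp}: Poissonization to decouple disjoint regions, de-Poissonization to return to a fixed sample size, the replacement of $(1-\mu)^{n-1}$ by $e^{-n\mu}$, the replacement of $f$ by $f(0)$ using continuity at the origin together with the concentration of the front near the corner, and the verification that contributions from $x$ bounded away from the corner are exponentially small. All of these errors are $o((\log n)^{d-1})$ and are absorbed into the stated remainder terms, completing both inequalities.
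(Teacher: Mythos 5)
Your lower bound is essentially the paper's own argument: the witness simplex $\Delta_{\alpha^\ast}(x)$ with $\alpha^\ast_i = 1/x_i$ is exactly the set $A = \{y : y\cdot\nu \le x\cdot\nu\}$, $\nu=(1/x_1,\dots,1/x_d)$, used in Lemma~\ref{lem:lower}, with the same volume $\frac{d^d}{d!}x_1\cdots x_d$ and the same reduction to Proposition~\ref{prop:prelim-asymp}; the Poissonization/de-Poissonization machinery you invoke is unnecessary, since the paper works directly with $E(L_n)=nP(Y_1\in\L^n)$ and $(1-\mu)^{n-1}$. One small correction: $x\in\L^n$ iff $x\in\F^n$ and $x$ lies on the \emph{boundary} of the convex hull of $\G^n$, not necessarily at a vertex (the $\argmin$ in the definition of $\L^n$ is a set); for a continuous density this distinction is almost surely vacuous, but your "vertex" characterization is the stronger, and hence wrong, direction to assume when proving an upper bound on $E(L_n)$.

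The upper bound is where the proposal has a genuine gap. You correctly identify the strategy --- exhibit a configuration of other samples certifying that $x$ is hidden in the interior of the hull, and bound the probability that no such configuration occurs --- but you never construct the certifying regions in general dimension, and you assert rather than derive the fraction $\frac{d-1}{4d-2}$. That construction and the ensuing computation are the entire content of the bound. The paper defines, for each $i$, a region $B_i$ of points below $x$ in every coordinate $j\neq i$ and satisfying $x_i<y_i<2x_i-\frac{x_i}{x_j}y_j$ for all $j\neq i$; each $B_i$ has volume exactly $\frac{1}{d}x_1\cdots x_d$, and occupancy of two distinct $B_i,B_j$ yields points $\tilde y,\tilde z\in\G^n$ with $x$ a strict convex combination of them, so $x\notin\L^n$. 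The constant then falls out of the exact identity $P(E\cap F)=\sum_i P\left(\cap_{j\neq i}E_j\cap F\right)-(d-1)P\left(\cap_j E_j\cap F\right)$ combined with $|B_j|=\frac{1}{d}x_1\cdots x_d$, giving $\frac{d^2}{2d-1}-\frac{d-1}{2}=\frac{3d-1}{4d-2}$ after integrating via Proposition~\ref{prop:prelim-asymp}. Your two-dimensional slope-feasibility picture is a reasonable exact description for $d=2$, but it does not generalize by itself, and "Poissonizing the two strips and passing to the stationary local process" does not produce any specific number; without exhibiting the regions, proving the two-occupancy implication, and computing the resulting emptiness probabilities, the inequality $E(L_n)\leq\frac{3d-1}{4d-2}c_{n,d}+o((\log n)^{d-1})$ is not established.
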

Theorem \ref{thm:small} provides a significant generalization of a previous result (Theorem 2 in \citep{hsiao2012}) that holds only for uniform distributions in $d=2$.  
The proof of Theorem \ref{thm:small} is also provided in the Appendix.

Combining Theorems \ref{thm:pareto-asymp} and \ref{thm:small}, we arrive at our main result:
\begin{theorem*}[Scalarization Gap Theorem]
\label{thm:Gap}
Assume $f:[0,1]^d \to [\sigma,M]$ is continuous at the origin, and $0 < \sigma < M < \infty$.  Then 
\begin{align*}\label{eq:opp-bound}
&\textstyle\frac{d-1}{4d-2} c_{n,d} + o((\log n)^{d-1}) \\
&\textstyle\hspace{1cm}\leq E(K_n - L_n) \leq \left(1 - \frac{d!}{d^d}\right) c_{n,d} + o((\log n)^{d-1}),
\end{align*}
as $n\to \infty$.
\end{theorem*}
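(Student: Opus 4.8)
The plan is to derive the Scalarization Gap Theorem as an immediate corollary of Theorems~\ref{thm:pareto-asymp} and~\ref{thm:small} via linearity of expectation. Since $K_n$ and $L_n$ are both integer-valued random variables defined on the same sample, I would begin by writing
\[
E(K_n - L_n) = E(K_n) - E(L_n),
\]
which requires no independence assumptions and holds for any $n$. The two one-sided bounds then follow by substituting the asymptotics for $E(K_n)$ and the appropriate bound on $E(L_n)$ into each side.

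The one point that needs a moment of care is translating the multiplicative statement of Theorem~\ref{thm:pareto-asymp}, namely $E(K_n) \sim c_{n,d}$, into the additive error form used in the conclusion. First I would observe that $c_{n,d} = (\log n)^{d-1}/(d-1)!$ is of exact order $(\log n)^{d-1}$, so $E(K_n) \sim c_{n,d}$ is equivalent to $E(K_n) = c_{n,d} + o(c_{n,d}) = c_{n,d} + o((\log n)^{d-1})$. With $E(K_n)$ in this form, the $o((\log n)^{d-1})$ error terms combine additively with those already appearing in Theorem~\ref{thm:small}.

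For the upper bound I would use the lower bound on $E(L_n)$ from Theorem~\ref{thm:small}: subtracting $\frac{d!}{d^d}c_{n,d} + o((\log n)^{d-1})$ from $E(K_n) = c_{n,d} + o((\log n)^{d-1})$ yields
\[
E(K_n - L_n) \le \Bigl(1 - \tfrac{d!}{d^d}\Bigr) c_{n,d} + o((\log n)^{d-1}).
\]
For the lower bound I would symmetrically use the upper bound on $E(L_n)$, subtracting $\frac{3d-1}{4d-2}c_{n,d} + o((\log n)^{d-1})$; the coefficient then simplifies, since
\[
1 - \frac{3d-1}{4d-2} = \frac{(4d-2) - (3d-1)}{4d-2} = \frac{d-1}{4d-2},
\]
giving exactly the stated lower bound $\frac{d-1}{4d-2}c_{n,d} + o((\log n)^{d-1})$.

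There is no genuine obstacle here: the theorem is a bookkeeping consequence of the two prior results, and the entire content of the difference bound has already been established in proving the asymptotics of $E(K_n)$ and $E(L_n)$ separately. The only thing to get right is the elementary algebra of the coefficients and the uniform treatment of the $o((\log n)^{d-1})$ remainders, both of which are routine.
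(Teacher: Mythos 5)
Your proposal is correct and matches the paper exactly: the paper obtains the Scalarization Gap Theorem precisely by combining Theorems \ref{thm:pareto-asymp} and \ref{thm:small} through $E(K_n - L_n) = E(K_n) - E(L_n)$, with the same coefficient algebra $1 - \frac{3d-1}{4d-2} = \frac{d-1}{4d-2}$. Your care in converting $E(K_n) \thicksim c_{n,d}$ into the additive form $c_{n,d} + o((\log n)^{d-1})$ is the right (and only) subtlety.
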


The \nameref{thm:Gap} shows that the fraction of Pareto-optimal points that \emph{cannot} be obtained by linear scalarization is at least $\frac{d-1}{4d-2}$.  
We provide experimental evidence supporting these bounds in Section \ref{exp_sup}.   

\section{Multi-criteria anomaly detection}
\label{detection}

We now formally define the multi-criteria anomaly detection problem. 
A list of notation is provided in Table \ref{tab:Notations} for reference. 
Assume that a training set $\mathcal{X}_N=\{X_1,\ldots,X_N\}$ of 
unlabeled data samples is available. 
Given a test sample $X$, the objective of anomaly detection is to declare $X$ to 
be an anomaly if $X$ is significantly different from samples in $\mathcal{X}_N$. 
Suppose that $K>1$ different evaluation criteria are given. 
Each criterion is associated with a measure for computing 
dissimilarities.  
Denote the dissimilarity between $X_i$ and $X_j$ computed using the dissimilarity measure 
corresponding to the $l$th criterion by $d_l(i,j)$. 
Note that $d_l(i,j)$ need not be a metric; in particular it is not necessary that $d_l(i,j)$ be a distance function over the sample space or that $d_l(i,j)$ satisfy the triangle inequality. 

\begin{table}[t]
\renewcommand{\arraystretch}{1.1}
\centering
\caption{List of notation}
\label{tab:Notations}

\begin{tabular}{lp{2.7in}}
\hline
Symbol & Definition \\
\hline
$K$ & Number of criteria (dissimilarity measures) \\
$N$ & Number of training samples \\
$X_i$ & $i$th training sample \\
$X$ & Single test sample \\
$d_l(i,j)$ & Dissimilarity between training samples $X_i$ and $X_j$ using 
	$l$th criterion \\
$D_{ij}$ & Dyad between training samples $X_i$ and $X_j$ \\
$\mathcal{D}$ & Set of all dyads between training samples \\
$\mathcal{F}_i$ & Pareto front $i$ of dyads between training samples \\
$M$ & Total number of Pareto fronts on dyads between training samples \\
$D_{i}$ & Dyad between training sample $X_i$ and test sample $X$ \\
$e_i$ & Pareto depth of dyad $D_i$ between training sample $X_i$ and
	test sample $X$ \\
$k_l$ & Number of nearest neighbors in criterion $l$ \\
$s$ & Total number of nearest neighbors (over all criteria) \\
$v(X)$ & Anomaly score of test sample $X$ \\
\hline
\end{tabular}
\end{table}

We define a \emph{dyad} between a pair of samples $i \in \{1,\ldots,N\}$ and $j \in \{1,\ldots,N\} \setminus i$ by a vector $D_{ij}=[d_1(i,j), \dots, d_K(i,j)]^T \in 
\mathbb{R}_{+}^K$. 
There are in total ${N\choose 2}$ different dyads for the training set.  
For convenience, denote the set of all dyads by $\Dset$. 
By the definition of strict dominance in Section \ref{prealg}, a dyad 
$D_{ij}$ 
strictly dominates another dyad $D_{i^* j^*}$ if $d_l(i,j) \leq d_l(i^*,j^*)$ 
for all $l \in \{1, 
\ldots, K\}$ and $d_l(i,j) < d_l(i^*,j^*)$ for some $l$. 
The first Pareto front $\F_1$ corresponds to the set of dyads from $\Dset$ that 
are not strictly dominated by any other dyads from $\Dset$. 
The second Pareto front $\F_2$ corresponds to the set of dyads from 
$\Dset \setminus \F_1$ that are not strictly dominated by any other dyads from 
$\Dset \setminus \F_1$, and so on, as defined in Section \ref{prealg}. 
Recall that we refer to $\F_i$ as a \emph{deeper} front than $\F_j$ if $i>j$.

\subsection{Pareto fronts on dyads}
\label{PDA_anomaly}

For each training sample $X_i$, there are $N-1$ dyads corresponding to its 
connections 
with the other $N-1$ training samples. 
If most of these dyads are located at shallow Pareto fronts, then the 
dissimilarities between $X_i$ and the other $N-1$ training samples are small under 
\emph{some} combination of the criteria. 
Thus, $X_i$ is likely to be a nominal sample. 
This is the basic idea of the proposed multi-criteria anomaly detection 
method using PDA. 

We construct Pareto fronts $\F_1,\ldots,\F_M$ of the dyads from the training 
set, where the total number of fronts $M$ 
is the required number of fronts such that each dyad is a member of a front. 
When a test sample $X$ is obtained, we create new dyads corresponding to 
connections between $X$ and training samples, as illustrated in Fig.~\ref{bi_sample}. 
Like with many other similarity-based anomaly detection methods, 
we connect each test sample to its $k$ nearest neighbors. 
$k$ could be different for each criterion, so we denote $k_l$ as the choice 
of $k$ for criterion $l$. 
We create $s = \sum_{l=1}^K k_l$ new dyads $D_1, D_2, \ldots, D_s$, corresponding to 
the connections between $X$ and the union of the $k_l$ nearest neighbors in each 
criterion $l$. 
In other words, we create a dyad between test sample $X$ and training sample $X_i$ if $X_i$ is among the 
$k_l$ nearest neighbors\footnote{If $X_i$ is one of the $k_l$ nearest neighbors of $X$
in multiple criteria, then multiple copies of the dyad $D_i$ are created. We have also experimented with creating only a single copy of the dyad and found very little difference in detection accuracy.} of $X$ in any criterion $l$. 
We say that $D_i$ is \emph{below} a front $\F_j$ if 
$D_i \succ D \text{ for some } D \in \F_j$,
i.e.~$D_i$ strictly dominates at least a single dyad in $\F_j$. 
Define the \emph{Pareto depth} of $D_i$ by 
\begin{equation*}
e_i=\min\{j\,|\,D_i \text{ is below } \F_j\}.
\end{equation*}
Therefore if $e_i$ is large, then $D_i$ will be near deep fronts, and the distance between $X$ and $X_i$ will be  
large under \emph{all} combinations of the $K$ criteria. 
If $e_i$ is small, then $D_i$ will be near shallow fronts, so the 
distance between $X$ and $X_i$ will be small under 
\emph{some} combination of the $K$ criteria. 

\subsection{Anomaly detection using Pareto depths}
\label{score}
In $k$-NN based anomaly detection algorithms such as those mentioned in 
Section \ref{relAnomaly}, the \emph{anomaly score} is a function of the $k$ 
nearest neighbors to a test sample. 
With multiple criteria, one could define an anomaly score by 
scalarization. 
From the probabilistic properties of Pareto fronts discussed in Section \ref{theory}, we know that Pareto optimization methods identify more Pareto-optimal points than linear scalarization methods and significantly more Pareto-optimal points than a single weight for scalarization\footnote{Theorems \ref{thm:pareto-asymp} and \ref{thm:small} require \iid~samples, but dyads are not independent. However, there are $O(N^2)$ dyads, and each dyad is only dependent on $O(N)$ other dyads.  This suggests that the theorems should also hold for the non-\iid~dyads as well, and it is supported by experimental results presented in Section \ref{exp_sup}.}. 

This motivates us to develop a \emph{multi-criteria anomaly score} using Pareto fronts.
We start with the observation from Fig.~\ref{bi_sample} that dyads 
corresponding to a nominal test sample are typically located near shallower
fronts than dyads corresponding to an anomalous test sample. 
Each test sample is associated with $s = \sum_{l=1}^K k_l$ new dyads, where the $i$th dyad 
$D_i$ has depth $e_i$. 
The Pareto depth $e_i$ is a \emph{multi-criteria dissimilarity measure} that indicates the dissimilarity between the test sample and training sample $i$ under multiple combinations of the criteria. 
For each test sample $X$, we define the anomaly score $v(X)$ to be the mean of the $e_i$'s, which corresponds to the average depth of the $s$ dyads associated with $X$, or equivalently, the average of the multi-criteria dissimilarities between the test sample and its $s$ nearest neighbors. Thus the anomaly score can be easily computed and compared to a decision 
threshold $\rho$ using the test
$$
v(X) = \frac{1}{s} \sum_{i=1}^s e_i \overset{H_1}{\underset{H_0}{\gtrless}} \rho.
$$

Recall that the \nameref{thm:Gap} provides bounds on the fraction of dyads on the first Pareto front that \emph{cannot} be obtained by linear scalarization. 
Specifically, at least $\frac{K-1}{4K-2}$ dyads will be missed by linear scalarization on average. 
These dyads will be associated with deeper fronts by linear scalarization, which will artificially inflate the anomaly score for the test sample, resulting in an increased false positive rate for any algorithm that utilizes non-negative linear combinations of criteria. 
This effect then cascades to dyads in deeper Pareto fronts, which also get assigned inflated anomaly scores. 
We provide some evidence of this effect on a real data experiment in 
Section \ref{traj}. 
Finally, the lower bound increases monotonically in $K$, which implies that the PDA approach gains additional advantages over linear combinations as the number of criteria increases. 

\subsection{PDA anomaly detection algorithm}
\label{sec:Algorithm}
\begin{figure}[t]
Training phase:
\begin{algorithmic}[1]
	\FOR {$l = 1 \to K$}
		\STATE {Calculate pairwise dissimilarities $d_l(i,j)$ between all 
		training samples $X_i$ and $X_j$}
	\ENDFOR
	\STATE {Create dyads $D_{ij} = [d_1(i,j), \ldots, d_K(i,j)]$ for all 
	training samples}
	\STATE {Construct Pareto fronts on set of all dyads until each dyad is 
	in a front}
\end{algorithmic}
Testing phase:
\begin{algorithmic}[1]
	\STATE {$nb \leftarrow [\,]$} \COMMENT{empty list}
	\FOR {$l = 1 \to K$}
		\STATE {Calculate dissimilarities between test sample $X$ 
		and all training samples in criterion $l$}
		\STATE {$nb_l \leftarrow k_l$ nearest neighbors of $X$}
		\STATE {$nb \leftarrow [nb,nb_l]$} \COMMENT{append neighbors to list}
	\ENDFOR
	\STATE {Create $s = \sum_{l=1}^K k_l$ new dyads $D_i$ between $X$ and training 
	samples in $nb$}
	\FOR {$i = 1 \to s$}
		\STATE {Calculate depth $e_i$ of $D_i$}
	\ENDFOR
	\STATE {Declare $X$ an anomaly if $v(X) = (1/s) \sum_{i=1}^s e_i > \rho$}	
\end{algorithmic}
\caption{Pseudocode for PDA anomaly detection algorithm.}
\label{alg}
\end{figure}

Pseudocode for the PDA anomaly detector is shown in Fig.~\ref{alg}.
The training phase involves creating $N \choose 2$ dyads 
corresponding to all pairs of training samples. 
Computing all pairwise dissimilarities in each criterion requires 
$O(mKN^2)$ floating-point operations (flops), where $m$ denotes the 
number of dimensions involved in computing a dissimilarity. 
The time complexity of the training phase is dominated by the construction of 
the Pareto fronts by non-dominated sorting. 
Non-dominated sorting is used heavily by the evolutionary computing community; 
to the best of our knowledge, the fastest algorithm for non-dominated sorting 
was proposed by Jensen \cite{jensen2003} and later generalized by 
Fortin et al.~\cite{Fortin2013} and utilizes $O(N^2 \log^{K-1} (N^2))$ comparisons. 
The complexity analyses in \cite{jensen2003,Fortin2013} are asymptotic in $N$ 
and assume $K$ fixed. 
We are unaware of any analyses of its asymptotics in $K$. 
Another non-dominated sorting algorithm proposed by 
Deb et al.~\cite{deb2002} constructs all of the Pareto fronts 
using $O(KN^4)$ comparisons, which is linear in the number of 
criteria $K$ but scales poorly with the number of training samples $N$. 
We evaluate how both approaches scale with $K$ and $N$ experimentally in Section \ref{sec:ComputTime}.

The testing phase 
involves creating dyads between the test sample and the $k_l$ nearest 
training samples in criterion $l$, which requires $O(mKN)$ flops. 
For each dyad $D_i$, we need to calculate the depth $e_i$. 
This involves comparing the test dyad with training dyads on 
multiple fronts until we find a training dyad that is dominated by the 
test dyad. 
$e_i$ is the front that this training dyad is a part of. 
Using a binary search to select the front and another binary search 
to select the training dyads within the front to compare to, 
we need to make $O(K\log^2 N)$ comparisons (in the worst case) to 
compute $e_i$. 
The anomaly score is computed by taking the mean of the $s$ $e_i$'s 
corresponding to the test sample; the score is then compared against a 
threshold $\rho$ to determine whether the sample is anomalous. 

\subsection{Selection of parameters}
\label{param}
The parameters to be selected in PDA are $k_1, \ldots, k_K$, which 
denote the number of nearest neighbors in each criterion. 

The selection of such parameters in unsupervised learning problems is very difficult in general. 
For each criterion $l$, we construct a $k_l$-NN graph using the corresponding 
dissimilarity measure. 
We construct symmetric $k_l$-NN graphs, i.e.~we connect samples $i$ and $j$ if 
$i$ is one of the $k_l$ nearest neighbors of $j$ or $j$ is one of the $k_l$ 
nearest neighbors of $i$. 
We choose $k_l = \log N$ as a starting point and, if necessary, increase $k_l$ until the $k_l$-NN graph is connected. 
This method of choosing $k_l$ is motivated by asymptotic results for connectivity in $k$-NN graphs and has been used as a 
heuristic in other unsupervised learning problems, such as spectral clustering 
\citep{vonLuxburg2007}. 
We find that this heuristic works well in practice, including on a real data set of pedestrian trajectories, which we present in Section \ref{traj}.

\section{Experiments}
\label{exp}
We first present an experiment involving the scalarization gap for dyads (rather than \iid~samples). Then we compare the PDA method with five single-criterion anomaly detection algorithms on a simulated data set and a real data set\footnote{The code for the experiments is available at \url{http://tbayes.eecs.umich.edu/coolmark/pda}.}. 
The five algorithms we use for comparison are as follows:
\begin{itemize}
\item kNN: distance to the $k$th nearest neighbor \citep{Byers1998}.
\item kNN sum: sum of the distances to the $k$ nearest neighbors \citep{Angiulli2002,Eskin2002}.
\item k-LPE: localized p-value estimate using the $k$ nearest neighbors \citep{Zhao2009}.
\item LOF: local density of the $k$ nearest neighbors \citep{Breunig2000}.
\item 1-SVM: $1$-class support vector machine \citep{Scholkopf2001}.
\end{itemize}

For these methods, we use linear combinations of the criteria with 
different weights (linear scalarization) to compare performance with the proposed multi-criteria PDA method. 
We find that the accuracies of the nearest neighbor-based methods do not vary 
much in our experiments for $k = 3, \ldots, 10$. 
The results we report use $k=6$ neighbors. 
For the $1$-class SVM, it is difficult to choose a bandwidth for the Gaussian 
kernel without having labeled anomalous samples. 
Linear kernels have been found to perform similarly to Gaussian kernels 
on dissimilarity representations for SVMs in classification tasks 
\cite{Chen2009}; 
hence we use a linear kernel on the scalarized dissimilarities for the 
$1$-class SVM.

\subsection{Scalarization gap for dyads}
\label{exp_sup}

\begin{figure}[t]
  \centering
\includegraphics[clip=true,trim = 0 25 40 30, width=0.4\textwidth]{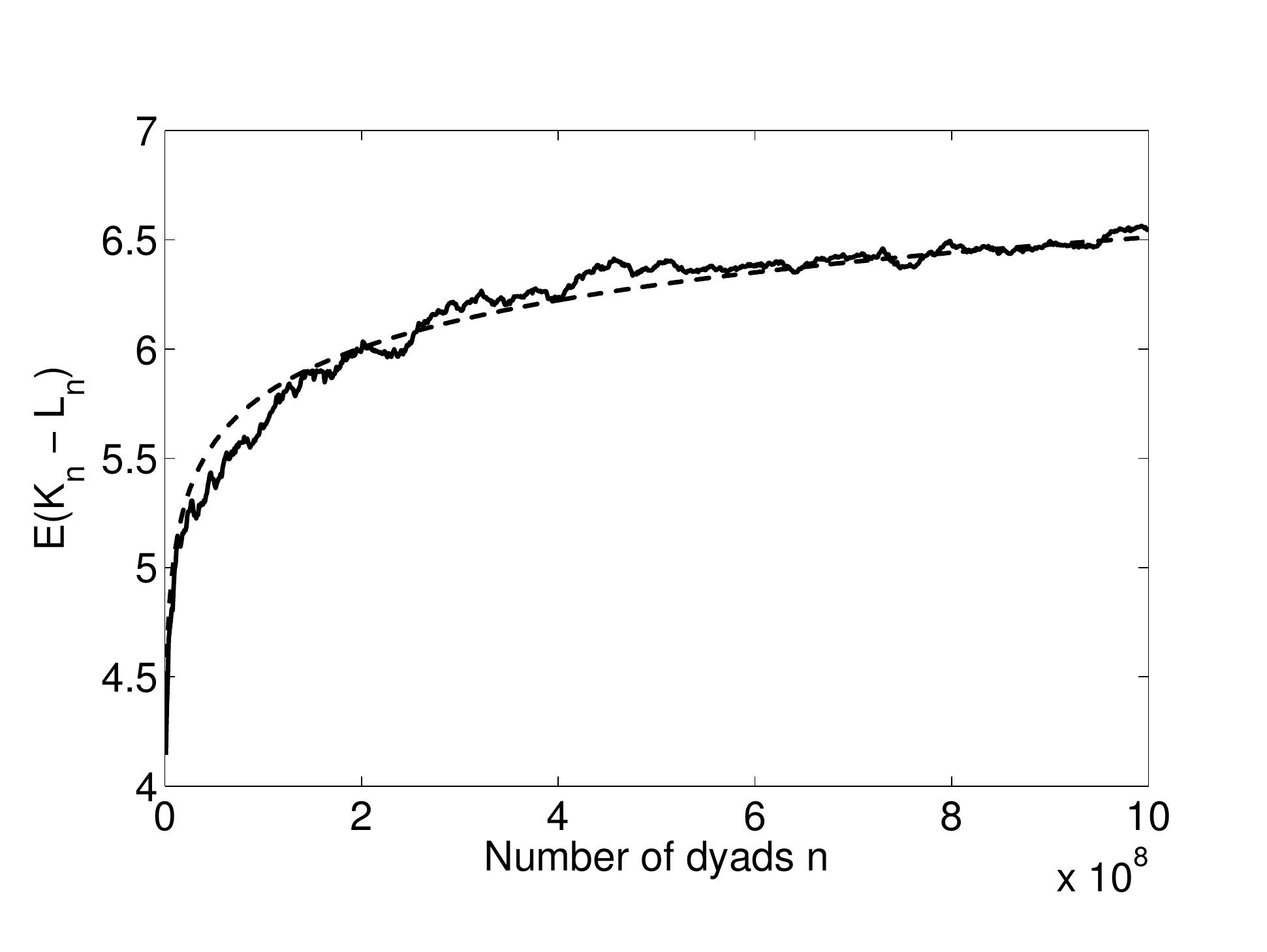}
\caption{ Sample means for $K_n - L_n$ versus number of dyads for dimension $d=2$.  Note the expected logarithmic growth.  The dotted line indicates the curve of best fit $y=0.314 \log n$.} 
\label{dyad_data}
\end{figure}

\begin{figure}[t]
  \centering
\includegraphics[clip=true,trim = 0 33 40 30, width=0.4\textwidth]{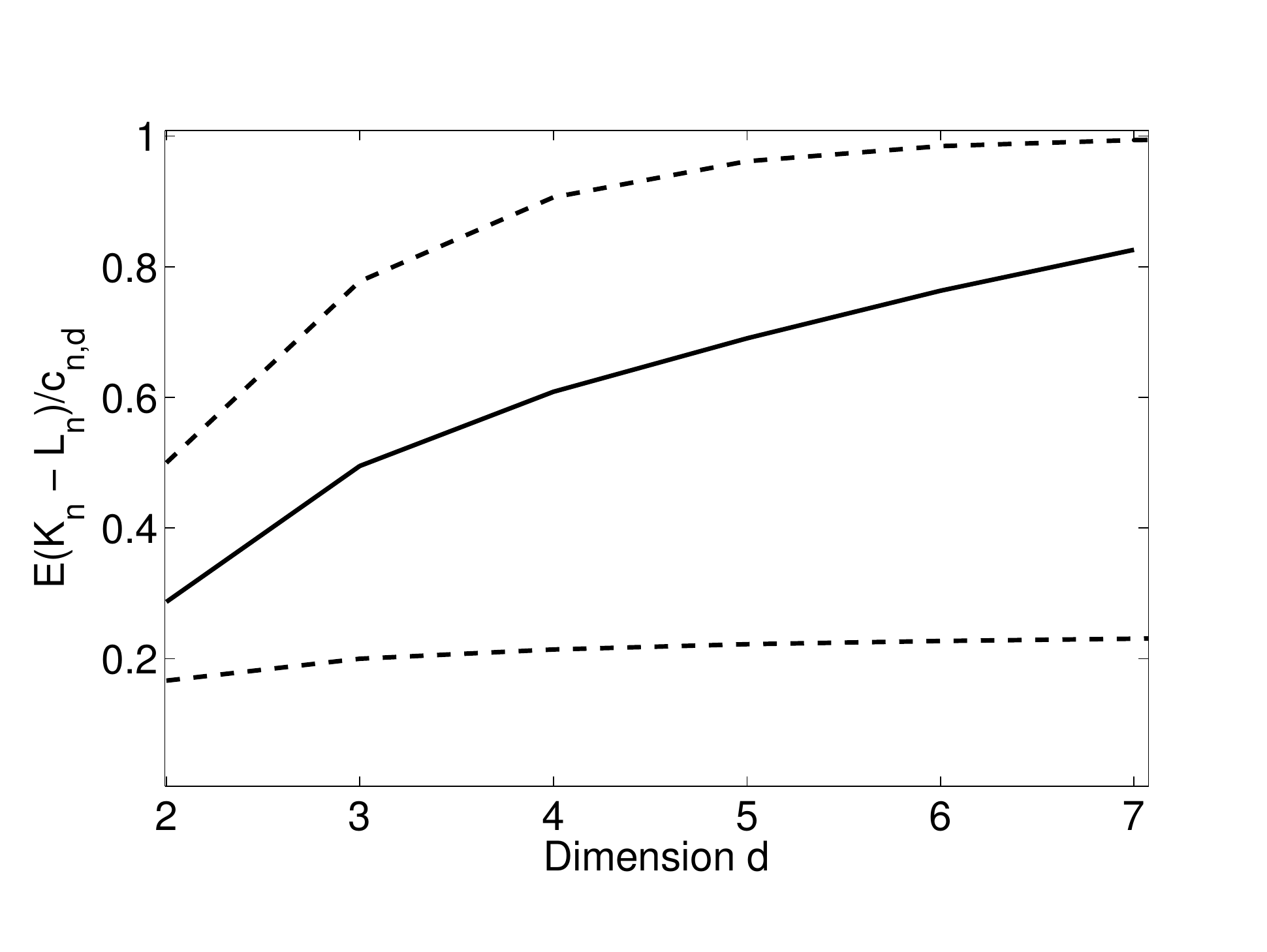}
\caption{Sample means for $(K_n -L_n)/c_{n,d}$ versus dimension for $n=100,\!128$ dyads. The upper and lower bounds established in the \nameref{thm:Gap} are given by the dotted lines in the figure.  We see in the figure that the fraction of Pareto optimal points that are not obtainable by linear scalarization increases with dimension.  }
\label{vsD-fig}
\end{figure}

Independence of $Y_1,\dots,Y_n$ is built into the assumptions of Theorems 
\ref{thm:pareto-asymp} and \ref{thm:small}, and thus, the \nameref{thm:Gap}, but it is clear that dyads (as constructed in Section \ref{detection}) are not independent.  Each dyad $D_{ij}$ represents a connection between two independent training samples $X_i$ and $X_j$.  For a given dyad $D_{ij}$, there are $2(N-2)$ corresponding dyads involving $X_i$ or $X_j$, and these are clearly not independent from $D_{ij}$.  However, all other dyads are independent from $D_{ij}$.  So while there are $O(N^2)$ dyads, each dyad is independent from all other dyads except for a set of size $O(N)$.  Since the \nameref{thm:Gap} is an asymptotic result, the above observation suggests it should hold for the dyads even though they are not~\emph{i.i.d}.  In this subsection we present some experimental results which suggest that the \nameref{thm:Gap} does indeed hold for dyads. 

We generate synthetic dyads here by drawing \emph{i.i.d.}~uniform samples in $[0,1]^2$ and then constructing dyads corresponding to the two criteria $|\Delta x|$ and $|\Delta y|$, which denote the absolute differences between the $x$ and $y$ coordinates, respectively.  The domain of the resulting dyads is again the box $[0,1]^2$.  In this case, the \nameref{thm:Gap} suggests that $E(K_n - L_n)$ should grow logarithmically.  Fig.~\ref{dyad_data} shows the sample means of $K_n-L_n$ versus number of dyads and a best fit logarithmic curve of the form $y = \alpha \log n$, where $n = {N \choose 2}$ denotes the number of dyads.  We vary the number of dyads between $10^6$ to $10^9$ in increments of $10^6$ and compute the size of $K_n-L_n$ after each increment. We compute the sample means over $1,\!000$ realizations.   A linear regression on $y/\log n$ versus $\log n$ gives $\alpha =0.314$, which falls in the range specified by the \nameref{thm:Gap}.

We next explore the dependence of $K_n-L_n$ on the dimension $d$.  Here, we generate $100,\!128$ dyads (corresponding to $N=448$ points in $[0,1]^d$) in the same way as before, for dimensions $d=2,\dots,7$.  The criteria in this case correspond to the absolute differences in each dimension.    In Fig.~\ref{vsD-fig} we plot $E(K_n - L_n)/c_{n,d}$ versus dimension to show the fraction of Pareto-optimal points that \emph{cannot} be obtained by scalarization. Recall from Theorem \ref{thm:pareto-asymp} that
\[E(K_n) \thicksim c_{n,d} = \frac{(\log n)^{d-1}}{(d-1)!} \ \ \text{ as }  n \to\infty.\]
 Based on the figure, one might conjecture that the fraction of unattainable Pareto optimal points converges to $1$ as $d \to \infty$.  If this is true, it would essentially imply that linear scalarization is useless for identifying dyads on the first Pareto front when there are a large number of criteria. As before, we compute the sample means over $1,\!000$ realizations of the experiment.

\subsection{Simulated experiment with categorical attributes}
\label{dimExp_categorical}
In this experiment, we perform multi-criteria anomaly detection on simulated 
data with multiple groups of categorical attributes. 
These groups could represent different types of attributes. 
Each data sample consists of $K$ groups of $20$ categorical attributes. 
Let $A_{ij}$ denote the $j$th attribute in group $i$, and let $n_{ij}$ 
denote the number of possible values for this attribute. 
We randomly select between $6$ and $10$ possible values for each attribute with equal probability independent of all other attributes. 
Each attribute is a random variable described by a categorical 
distribution, where the parameters $q_1, \dots, q_{n_{ij}}$ of the categorical 
distribution are sampled from a Dirichlet distribution with parameters 
$\alpha_1, \dots, \alpha_{n_{ij}}$. 
For a nominal data sample, we set $\alpha_1=5$ and $\alpha_2, \dots, 
\alpha_{n_{ij}}=1$ for each attribute $j$ in each group $i$. 

To simulate an anomalous data sample, we randomly select a group $i$ with 
probability $p_i$ for which the parameters of the Dirichlet distribution 
are changed to $\alpha_1 = \dots = \alpha_{n_{ij}} = 1$ for each 
attribute $j$ in group $i$. 
Note that different anomalous samples may differ in the group that is 
selected. 
The $p_i$'s are chosen such that $p_i / p_j = i/j$ with  
$\sum_{i=1}^K p_i = 0.5$, so that the probability that a test sample is 
anomalous is $0.5$. 
The non-uniform distribution on the $p_i$'s results in some criteria being 
more useful than others for identifying anomalies. 
The $K$ criteria for anomaly detection are taken to be the dissimilarities 
between data samples for each of the $K$ groups of attributes. 
For each group, we calculate the dissimilarity over the attributes using 
a dissimilarity measure for anomaly detection on categorical data 
proposed in \citet{Eskin2002}\footnote{We obtain similar results with 
several other dissimilarity measures for categorical data, including 
the Goodall2 and IOF measures described in the survey paper by 
Boriah et al.~\citet{boriah2008similarity}.}.

We draw $400$ training samples from the nominal distribution and $400$ test 
samples from a mixture of the nominal and anomalous distributions. 
For the single-criterion algorithms, which we use as baselines for comparison, we use linear scalarization with multiple choices 
of weights. 
Since a grid search scales exponentially with the number of criteria $K$ and is computationally 
intractable even for moderate values of $K$, we instead uniformly sample 
$100K$ weights from the ($K-1$)-dimensional simplex. 
In other words, we sample $100K$ weights from a uniform distribution over all 
convex combinations of $K$ criteria. 

\begin{figure}[tp]
\begin{center}
  \includegraphics[width=6.5 cm]{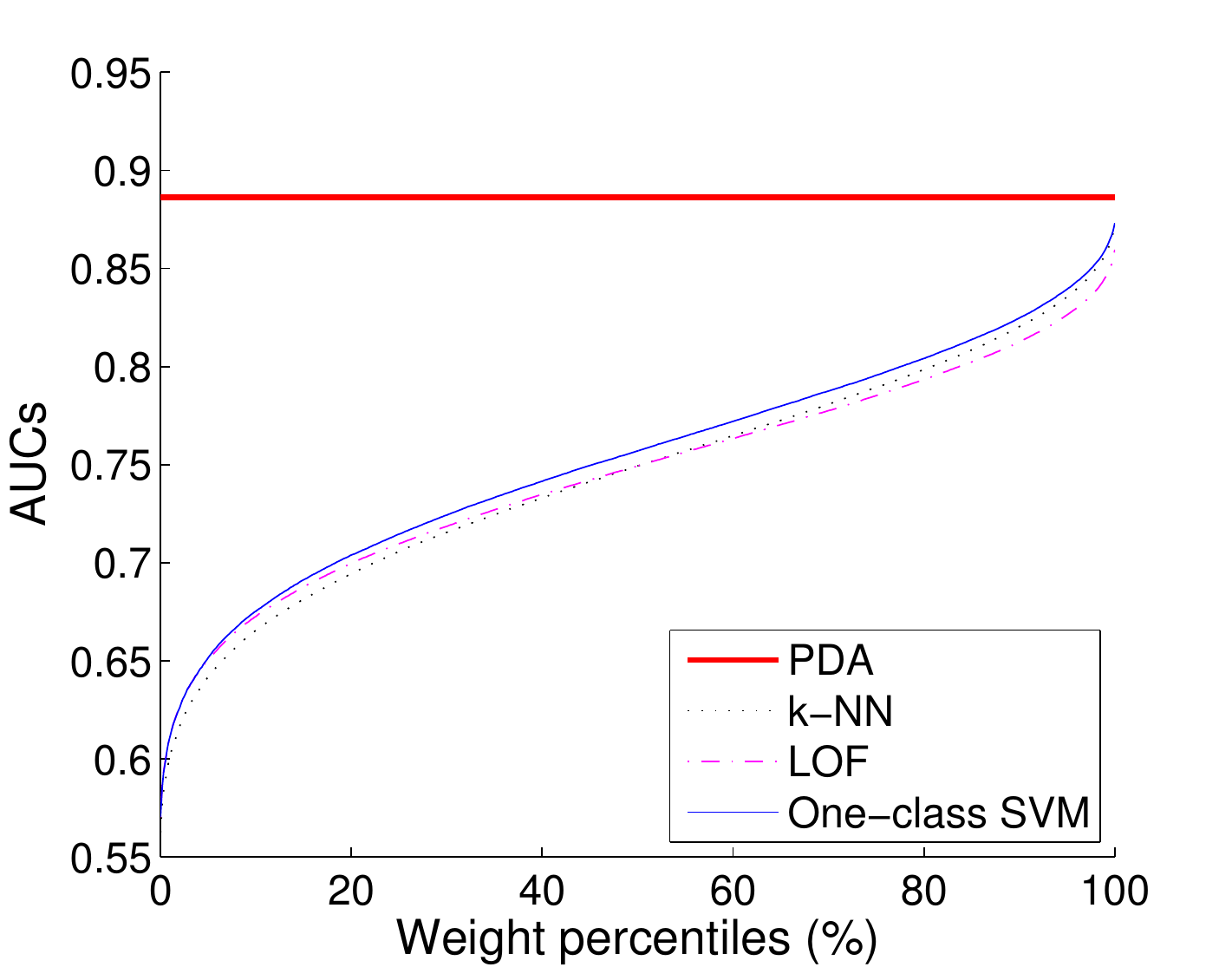}
  \caption{AUC of PDA compared to AUCs of single-criterion methods for simulated experiment. The single-criterion methods use $600$ randomly sampled weights for linear scalarization, with weights ordered from worst choice of weights (left) to best choice (right) in terms of maximizing AUC. The proposed PDA algorithm is a multi-criteria algorithm that does not require selecting weights. PDA outperforms all of the single-criterion methods, even for the best choice of weights, which is not known in advance.
 }
  \label{CategoricalAUCPercentile}
  \end{center}
\end{figure}

\begin{table}[t]
\renewcommand{\arraystretch}{1.1}
\caption{Comparison of AUCs for simulated experiment.
Best performer up to one standard error is shown in \textbf{bold}. 
PDA does not use weights so it has a single AUC. 
Median and best AUCs over all choices of weights are 
shown for the other methods.} 
\label{auc_table}
\centering
\begin{tabular}{ccccc}
\hline
\multirow{2}{*}{Method} & \multicolumn{2}{c}{AUC by weight}\\
& Median & Best \\
\hline
PDA & \multicolumn{2}{c}{\bf{0.885 $\pm$ 0.002}} \\
k-NN       &0.749 $\pm$ 0.002&    0.872 $\pm$ 0.002 \\
k-NN sum   &0.747 $\pm$ 0.002&    0.870 $\pm$ 0.002\\
k-LPE     &0.744 $\pm$ 0.002&    0.867 $\pm$ 0.002 \\
LOF       &0.749 $\pm$ 0.002&    0.859 $\pm$ 0.002 \\
1-SVM     &0.757 $\pm$ 0.002&    0.873 $\pm$ 0.002 \\
\hline
\end{tabular}
\end{table}

\subsubsection{Detection accuracy}
The different methods are evaluated using the receiver operating 
characteristic (ROC) curve and the area under the ROC curve (AUC). 
We first fix the number of criteria $K$ to be $6$. 
The mean AUCs over $100$ simulation runs
are shown in Fig.~\ref{CategoricalAUCPercentile}. 
Multiple choices of weights are used for linear scalarization for the single-criterion algorithms; 
the results are ordered from worst to best weight in terms of maximizing AUC. 
kNN, kNN sum, and k-LPE perform roughly equally so only kNN is shown in the figure. 
Table \ref{auc_table} presents a comparison of the AUC for PDA with the median and best AUCs over all choices of weights for scalarization. 
Both the mean and standard error of the AUCs over the $100$ simulation runs are shown. 
Notice that PDA outperforms even the best weighted combination for each of 
the five single-criterion algorithms and significantly outperforms the combination resulting in the median AUC, which is more representative of the performance one expects to obtain by arbitrarily choosing weights.

\begin{figure}[tp]
\begin{center}
  \includegraphics[width=6.5 cm]{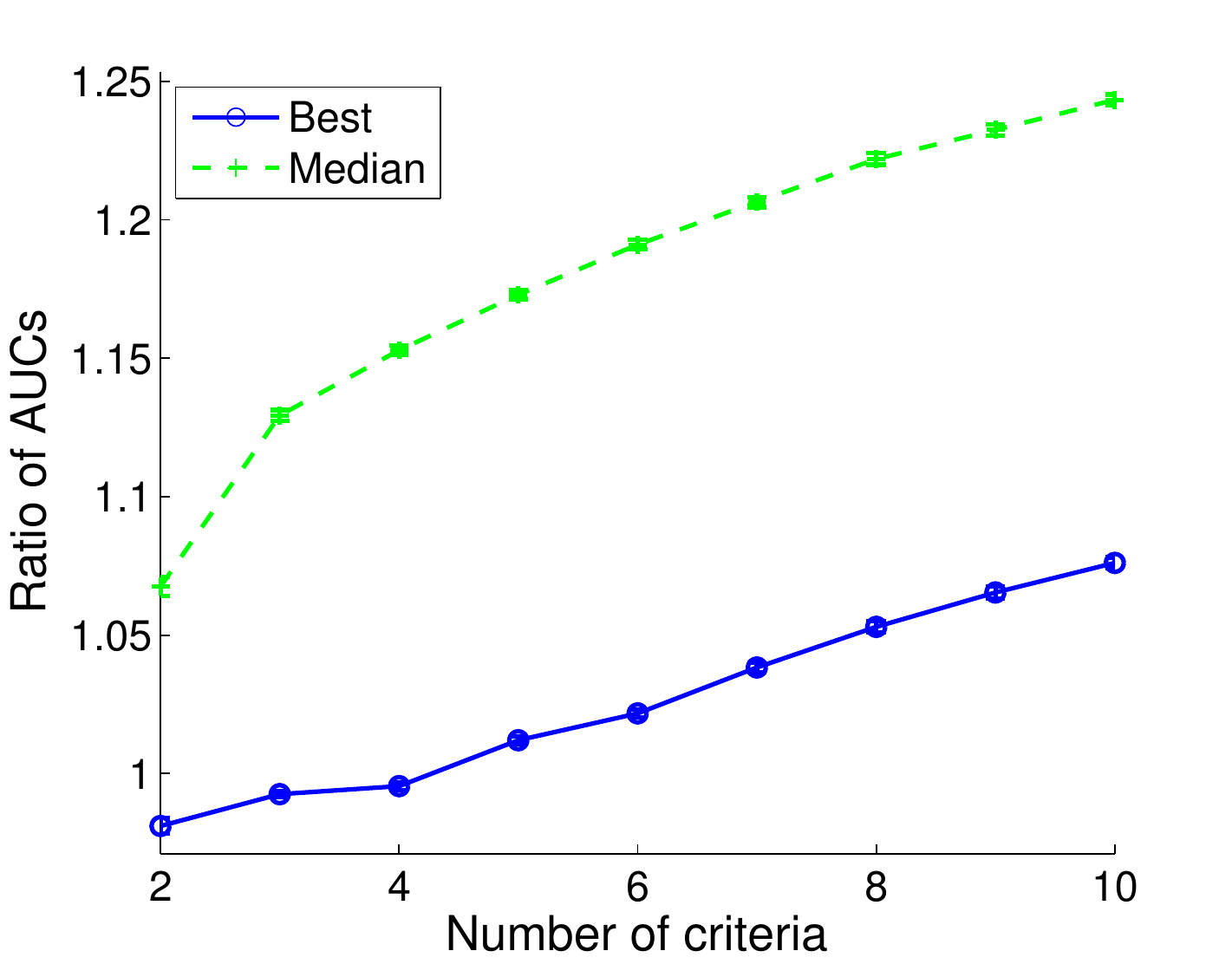}
  \caption{The ratio of the AUC for PDA compared to the best and median AUCs of scalarization using LOF as the number of criteria $K$ is varied in the simulated experiment. $100K$ choices of weights uniformly sampled from the $(K-1)$-dimensional simplex are chosen for scalarization. PDA perfoms significantly better than the median over all weights for all $K$. For $K>4$, PDA outperforms the best weights for scalarization, and the margin increases as $K$ increases.
 }
  \label{ExpDifferetK}
  \end{center}
\end{figure}

Next we investigate the performance gap between PDA and scalarization as 
the number of criteria $K$ varies from $2$ to $10$. 
The performance of the five single-criterion algorithms is very close, so we show 
scalarization results only for LOF. 
The ratio of the AUC for PDA to the AUCs of the best and median weights for scalarization 
are shown in Fig.~\ref{ExpDifferetK}. 
PDA offers a significant improvement compared to the median over the weights for scalarization. 
For small values of $K$, PDA performs roughly equally with scalarization under the best choice of weights. 
As $K$ increases, however, PDA clearly outperforms scalarization, and the gap grows 
with $K$. 
We believe this is partially due to the inadequacy of scalarization for 
identifying Pareto fronts as described in the \nameref{thm:Gap} 
and partially due to the difficulty in selecting optimal weights for the 
criteria. 
A grid search may be able to reveal better weights for 
scalarization, but it is also computationally intractable for large $K$. 
Thus we conclude that PDA is clearly the superior approach for large $K$. 

\subsubsection{Computation time}
\label{sec:ComputTime}
We evaluate how the computation time of PDA scales with varying $K$ and $N$ 
using both the non-dominated sorting procedures of Fortin et 
al.~\cite{Fortin2013} (denoted by PDA-Fortin) and Deb et al.~\cite{deb2002} 
(denoted by PDA-Deb) discussed in Section 
\ref{sec:Algorithm}. 
The time complexity of the testing phase is negligible compared to the training 
phase so we measure the computation time required to train the PDA anomaly 
detector. 

We first fix $K=2$ and measure computation time for $N$ uniformly distributed 
on a log scale from $100$ to $10,000$. 
Since the actual computation time depends heavily on the implementation of the 
non-dominated sorts, we normalize computation times by the time required to 
train the anomaly detector for $N=100$ so we can observe the scaling in $N$.

\begin{figure}[t]
\centering
\includegraphics[width=3.2in]{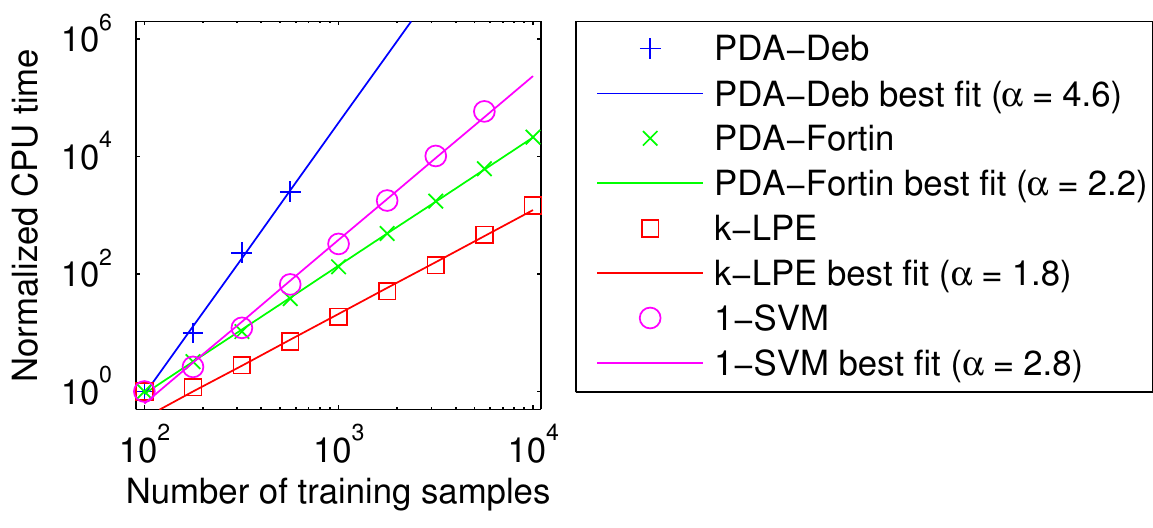}
\caption{Normalized computation time as a function of the number of training 
samples $N$ in the simulated experiment. 
Best fit curves are of the form $y = \beta N^\alpha$. 
The best fit curve for 1-SVM is extrapolated beyond $5,\!624$ samples, and the 
best fit curve for PDA-Deb is extrapolated beyond $563$ samples.}
\label{fig:CpuTimeN}
\end{figure}

The normalized times for PDA as well as k-LPE and 1-SVM are shown in Fig.~\ref{fig:CpuTimeN}. 
Best fit curves of the form $y = \beta N^\alpha$ are also plotted, with 
$\alpha$ and $\beta$ estimated by linear regression. 
PDA-Deb has time complexity of $O(KN^4)$, and the estimated exponent 
$\alpha = 4.6$. 
Of the four algorithms, it has the worst scaling in $N$. 
PDA-Fortin has time complexity of $O(N^2 \log^{K-1}(N^2))$, and the estimated 
exponent $\alpha = 2.2$, confirming that it scales much better than PDA-Deb 
and is applicable to large data sets. 
k-LPE is representative of the k-NN algorithms and has time complexity of 
$O(N^2)$; the estimated exponent $\alpha = 1.8$. 
It is difficult to determine the time complexity of 1-SVM due to its 
iterative nature. 
The estimated exponent for 1-SVM is $\alpha = 2.8$, suggesting that it scales 
worse than PDA-Fortin.

\begin{figure}[t]
\centering
\subfloat[PDA-Deb]{\label{fig:DebK}
\includegraphics[width=1.5in]{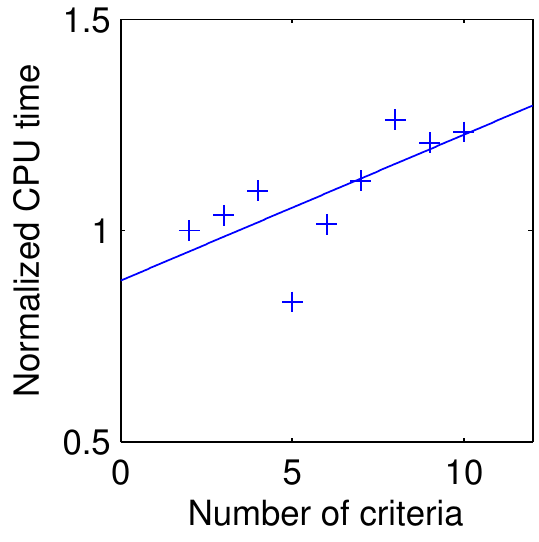}}
\qquad
\subfloat[PDA-Fortin]{\label{fig:FortinK}
\includegraphics[width=1.5in]{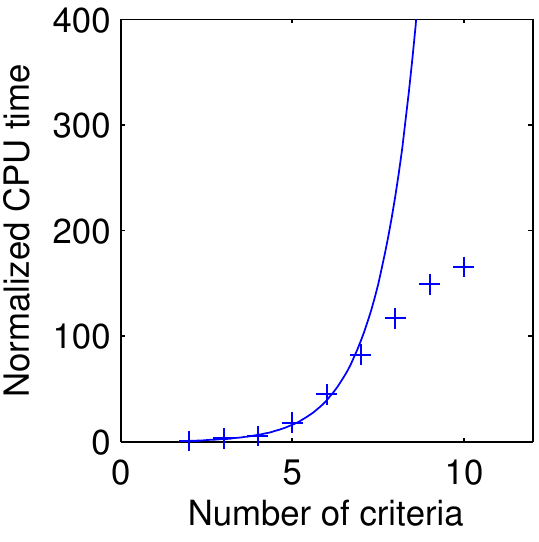}}

\caption[]{Normalized computation time as a function of the number of criteria $K$ in the simulated experiment. 
PDA-Deb \subref{fig:DebK} appears to be linear in $K$ as predicted. 
PDA-Fortin \subref{fig:FortinK} initially appears to be exponential in $K$ but the computation time does not continue to increase exponentially beyond $K=7$.}
\end{figure}

Next we fix $N=400$ and measure computation time for $K$ varying from $2$ to 
$10$. 
We normalize by the time required to train the anomaly detector for $K=2$ to 
observe the scaling in $K$. 
The normalized time for PDA-Deb is shown in Fig.~\ref{fig:DebK} along with a best fit 
line of the form $y = \alpha K$. 
The normalized time does indeed appear to be linear in $K$ and grows slowly. 
The normalized time for PDA-Fortin is shown in Fig.~\ref{fig:FortinK} along with a 
best fit 
curve of the form $y = \alpha \beta^K$ fit to $K=2, \ldots, 7$. 
Notice that the computation time initially increases exponentially but 
increases at a much slower, possibly even linear, rate beyond $K=7$. 
The analyses of time complexity from \cite{jensen2003,Fortin2013} 
are asymptotic in $N$ and assume $K$ fixed; we 
are not aware of any analyses of time complexity asymptotic in $K$. 
Our experiments suggest that PDA-Fortin is computationally tractable for 
non-dominated sorting in PDA even for large $K$. 
Finally we note that the scaling in $K$ for scalarization methods is trivial, 
depending simply on the number of choices for scalarization weights, which is 
exponential for a grid search. 

\subsection{Pedestrian trajectories}
\label{traj}
We now present an experiment on a real data set that contains thousands of 
pedestrians' trajectories in an open area monitored by a video camera \citep{Majecka2009}. 
We represent a trajectory with $p$ time samples by
$$T = \begin{bmatrix}
x_1 & x_2 & \ldots & x_p \\
y_1 & y_2 & \ldots & y_p
\end{bmatrix},$$
where $[x_t,y_t]$ denote a pedestrian's position at time step $t$. 
The pedestrian trajectories are of different lengths so we cannot simply treat the trajectories as vectors in $\mathbb{R}^p$ and calculate Euclidean distances between them. Instead, we propose to calculate dissimilarities between trajectories using two separate criteria for which trajectories may be dissimilar.

\begin{figure}[t]
\begin{center}
\subfloat[]{
  \includegraphics[width=4 cm, trim=1cm 0cm 1cm 0.7cm, clip=true]{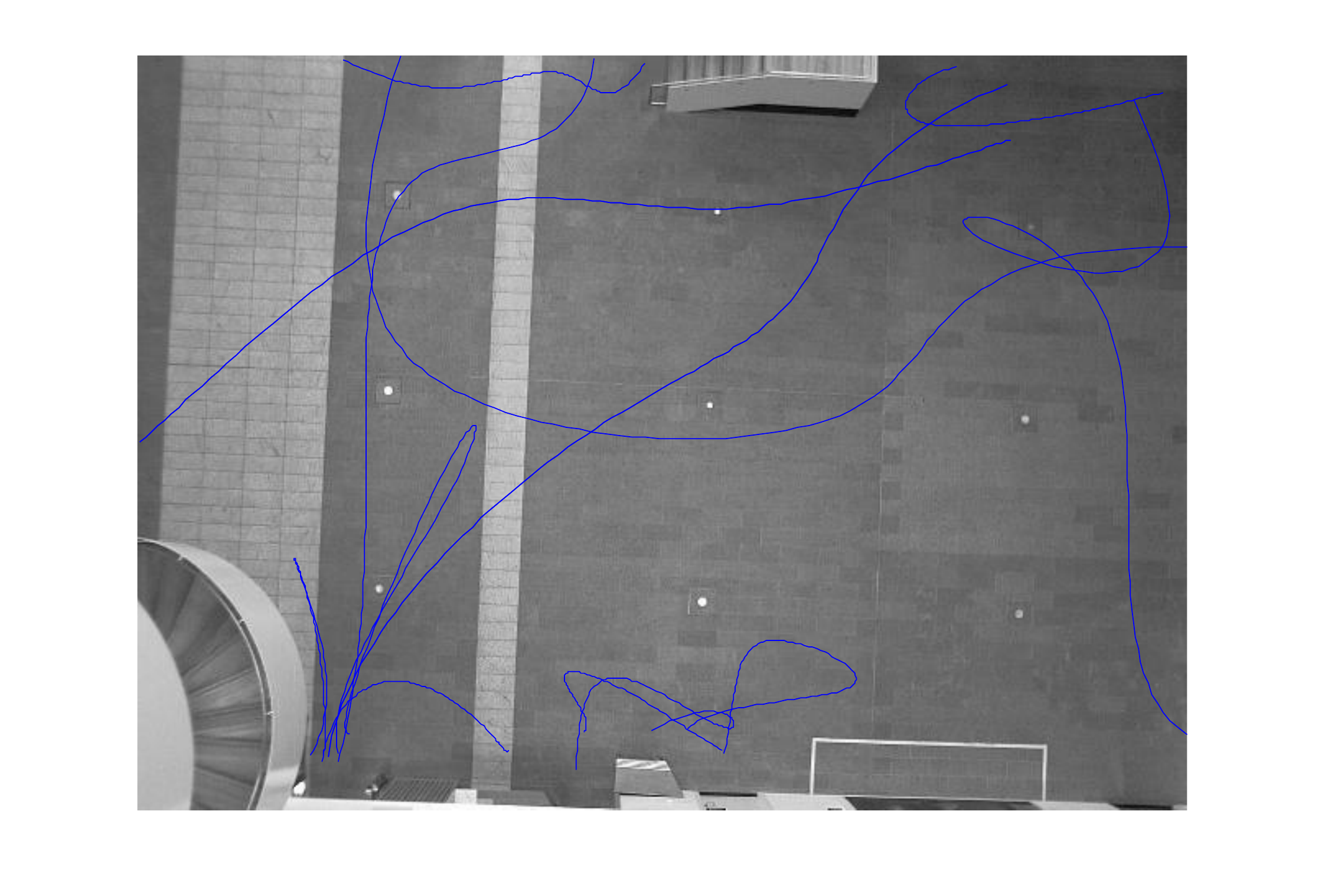}
  \label{trajAnomalous}}
  \,
\subfloat[]{
  \includegraphics[width=4 cm, trim=1cm 0cm 1cm 0.7cm, clip=true]{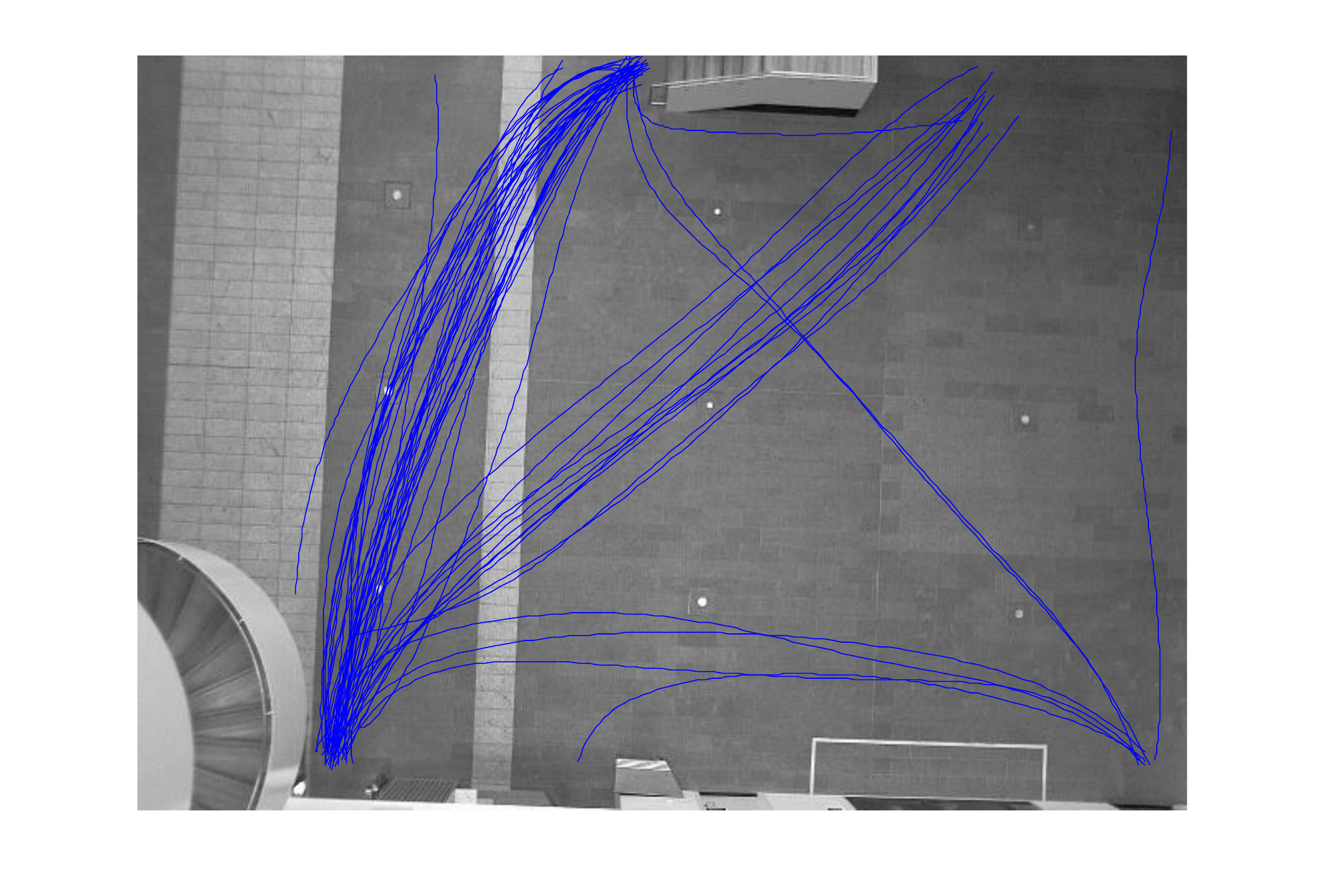}
  \label{trajNominal}}
  \caption[]{\subref{trajAnomalous} Some anomalous pedestrian trajectories detected by PDA. 
  \subref{trajNominal} Trajectories with relatively low anomaly scores. The two criteria used are walking speed and trajectory shape. Anomalous trajectories could have anomalous speeds or shapes (or both), so some anomalous trajectories may not look anomalous by shape alone. }
  \label{traj_PDA}
\end{center}
\end{figure}

The first criterion is to compute the dissimilarity in \emph{walking speed}. 
We compute the instantaneous speed at all time steps along each trajectory 
by finite differencing, i.e.~the speed of trajectory $T$ at time step $t$ 
is given by $\sqrt{(x_t - x_{t-1})^2 + (y_t - y_{t-1})^2}$. 
A histogram of speeds for each trajectory is obtained in this manner. 
We take the dissimilarity between two trajectories $S$ and $T$ to be the \emph{Kullback-Leibler (K-L) divergence} between the normalized speed histograms for those trajectories. 
K-L divergence is a commonly used measure of the difference between two probability distributions.
The K-L divergence is asymmetric; to convert it to a dissimilarity we use the symmetrized K-L divergence $D_{KL}(S||T) + D_{KL}(T||S)$ as originally defined by Kullback and Leibler \citet{kullback1951information}. 
We note that, while the symmetrized K-L divergence is a dissimilarity, it does not, in general, satisfy the triangle inequality and is not a metric.

The second criterion is to compute the dissimilarity in \emph{shape}. 
To calculate the shape dissimilarity between two trajectories, we apply a technique known as \emph{dynamic time warping} (DTW) \citep{sankoff1983time}, which first non-linearly warps the trajectories in time to match them in an optimal manner. We then take the dissimilarity to be the summed Euclidean distance between the warped trajectories. This dissimilarity also does not satisfy the triangle inequality in general and is thus not a metric. 

The training set for this experiment consists 
of $500$ randomly sampled trajectories from the data set, a small fraction 
of which may be anomalous. 
The test set consists of $200$ trajectories 
($150$ nominal and $50$ anomalous). 
The trajectories in the test set are labeled as nominal or anomalous by 
a human viewer. 
These labels are used as ground truth to evaluate anomaly detection performance. 
Fig.~\ref{traj_PDA} shows some anomalous trajectories and nominal trajectories detected using PDA.

\begin{figure}[tp]
\begin{center}
  \includegraphics[width=6.5 cm]{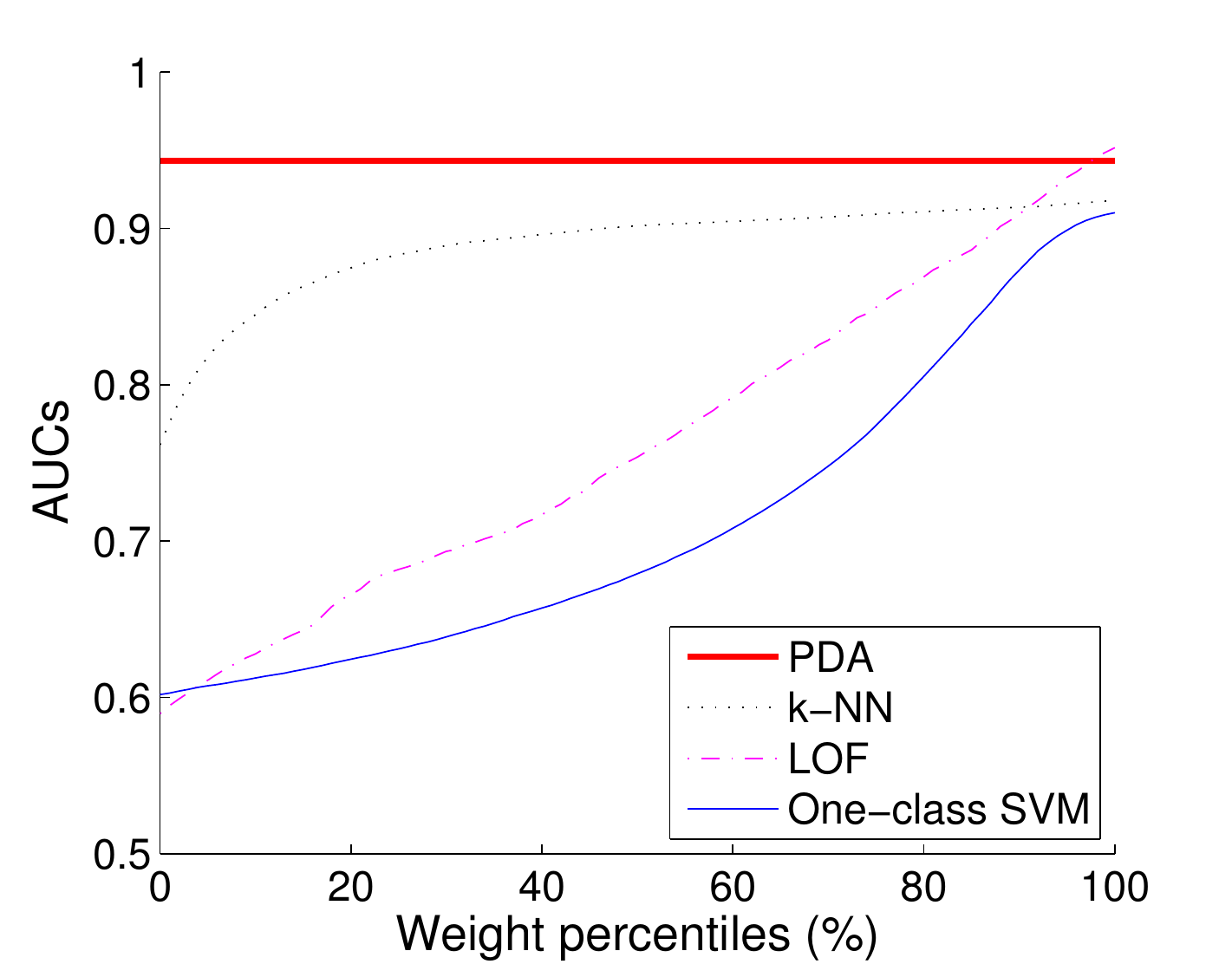}
  \caption{AUC of PDA compared to AUCs of single-criterion methods for the pedestrian trajectories experiment. The single-criterion methods use linear scalarization with 100 uniformly spaced weights; weights are ordered from worst (left) to best (right) in terms of maximizing AUC. PDA outperforms the single-criterion methods for almost all choices of weights.
 }
  \label{TrajAUCPercentile}
  \end{center}
\end{figure}

\begin{table}[t]
\renewcommand{\arraystretch}{1.1}
\caption{Comparison of AUCs for pedestrian trajectories experiment.
Best performer up to one standard error is shown in \textbf{bold}.}
\label{auc_table_traj}
\begin{center}
\begin{tabular}{ccc}
\hline
\multirow{2}{*}{Method} & \multicolumn{2}{c}{AUC by weight} \\
& Median & Best \\
\hline
PDA & \multicolumn{2}{c}{0.944 $\pm$ 0.002} \\
k-NN       &0.902 $\pm$ 0.002&    0.918 $\pm$ 0.002 \\
k-NN sum   &0.901 $\pm$ 0.003&    0.924 $\pm$ 0.002\\
k-LPE     &0.892 $\pm$ 0.003&    0.917 $\pm$ 0.002\\
LOF       &0.754 $\pm$ 0.011&    {\bf 0.952 $\pm$ 0.003} \\
1-SVM     &0.679 $\pm$ 0.011&    0.910 $\pm$ 0.003\\

\hline
\end{tabular}
\end{center}
\end{table}

\begin{figure}[t]
\begin{center}

  \includegraphics[width=6.5 cm]{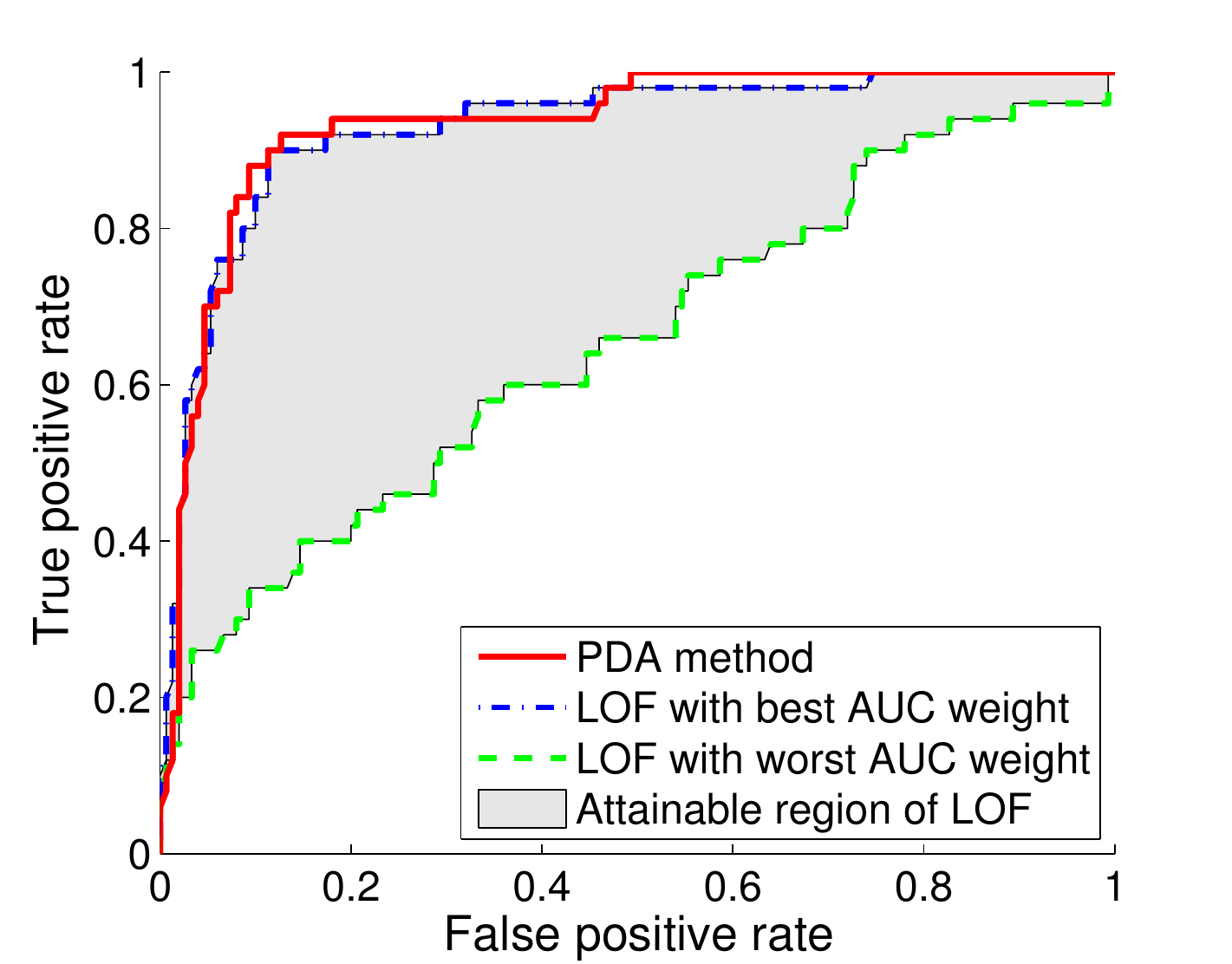}
  \caption{ROC curves for PDA and attainable region for LOF over 
  $100$ choices of weights for one run of the pedestrian trajectories experiment. The attainable region denotes the possible ROC curves for LOF corresponding to different choices of weights for linear scalarization. The ROCs for linear scalarization vary greatly as a function of the weights.}
  \label{roc_curves}
\end{center}
\end{figure}

We run the experiment $20$ times; for each run, we use a different random sample of training trajectories.
Fig.~\ref{TrajAUCPercentile} shows the performance of PDA as compared to the other 
algorithms using $100$ uniformly spaced weights for convex combinations. 
Notice that PDA has higher AUC than the other methods for almost all choices of 
weights for the two criteria. 
The AUC for PDA is shown in Table \ref{auc_table_traj} along with AUCs for the median and best choices of weights for the single-criterion methods. 
The mean and standard error over the $20$ runs are shown.
For the best choice of weights, LOF is the single-criterion method with the highest AUC, but it also has the lowest AUC for the worst choice of weights. 
For a more detailed comparison, the ROC curve for PDA and the 
attainable region for LOF (the region between the ROC curves corresponding 
to weights resulting in the best and worst AUCs) is shown in Fig.~\ref{roc_curves}.  
Note that the ROC curve for LOF can vary significantly based on the choice of weights. 
The ROC for 1-SVM also depends heavily on the weights. 
In the unsupervised setting, it is unlikely that one would be able to achieve the ROC curve corresponding to the weight with the highest AUC, so the expected performance should be closer to the median AUCs in Table \ref{auc_table_traj}.

\begin{figure}[t]
\begin{center}
  \includegraphics[width=6.5 cm]{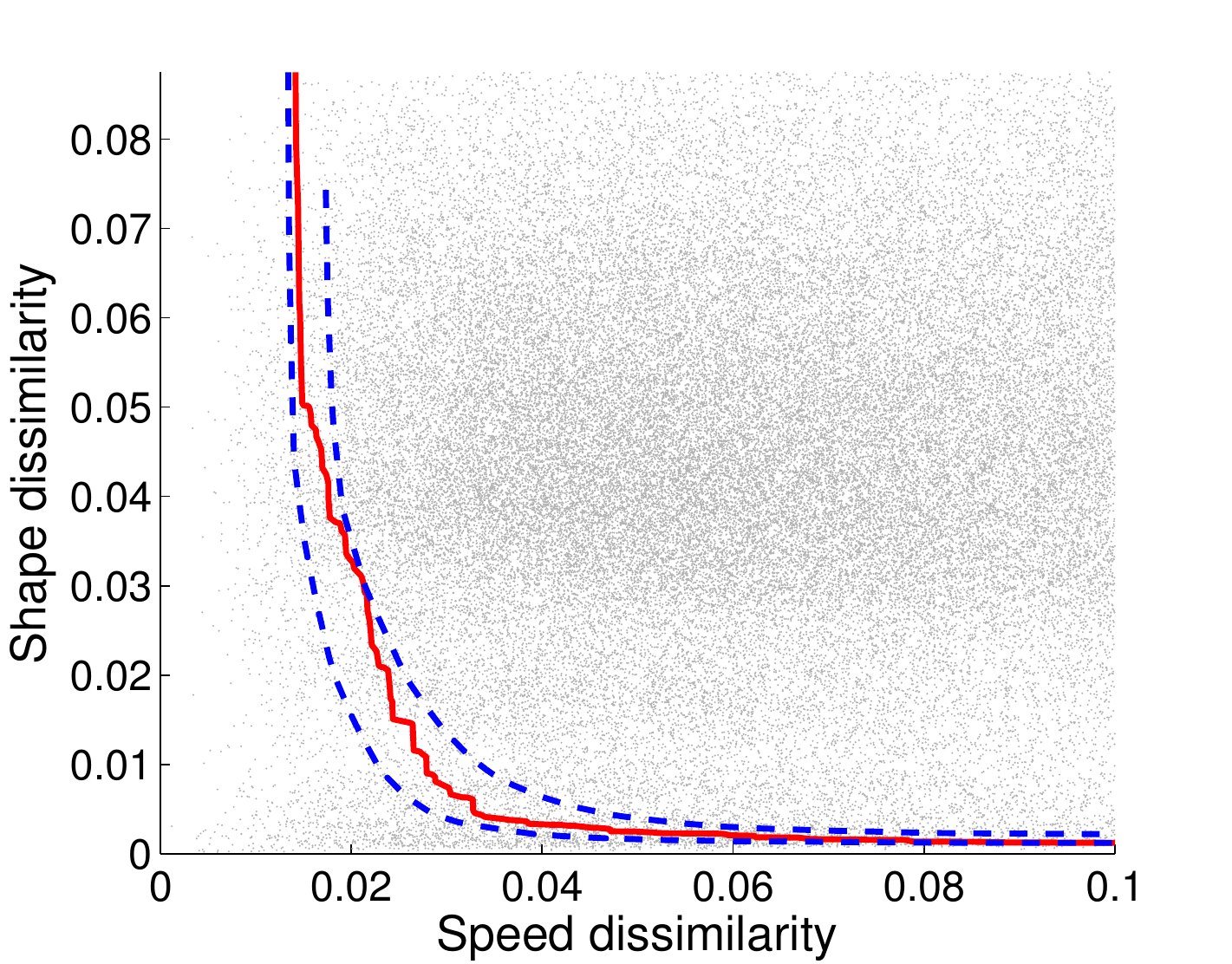}
  \caption{Comparison of a Pareto front (solid red line) on dyads (gray dots) with convex fronts (blue dashed lines) obtained by linear scalarization. The dyads towards the middle of the Pareto front are found in deeper convex fronts than those towards the edges. The result would be inflated anomaly scores for the samples associated with the dyads in the middle of the Pareto fronts when using linear scalarization.}
  \label{convexVSpareto}
  \end{center}
\end{figure}

Many of the Pareto fronts on the dyads are \emph{non-convex}, partially explaining the superior performance of the proposed PDA algorithm. 
The non-convexities in the Pareto fronts lead to inflated anomaly scores for linear scalarization. 
A comparison of a Pareto front with two convex fronts (obtained by scalarization) is shown in Fig.~\ref{convexVSpareto}. 
The two convex fronts denote the shallowest and deepest convex fronts containing dyads on the illustrated Pareto front. 
The test samples associated with dyads near the middle of the Pareto fronts would suffer the aforementioned score inflation, as they would be found in deeper convex fronts than those at the tails.

\begin{figure}[t]
\begin{center}
  \includegraphics[width=7 cm]{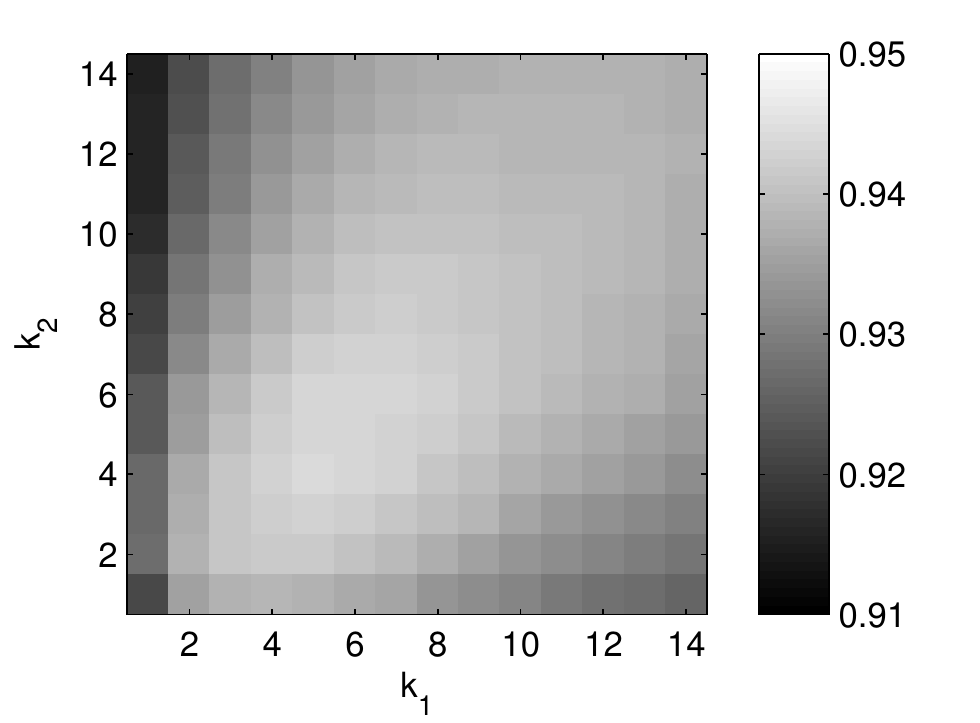}
  \caption{AUCs for different choices of $[k_1,k_2]$ in the pedestrian 
  trajectories experiment. 
  The AUC for the parameters chosen using the proposed heuristic $[k_1=6,k_2=7]$ is within $0.001$ of the maximum AUC obtained by the parameters $[k_1=5,k_2=4]$.}
  \label{auc_heatmap}
  \end{center}
\end{figure}

Finally we note that the proposed PDA algorithm does not appear to be very sensitive to the choices of the number of neighbors, as shown in Fig.~\ref{auc_heatmap}. 
In fact, the heuristic proposed for choosing the $k_l$'s in Section \ref{param} performs quite well in this experiment. Specifically, the AUC obtained when using the parameters chosen by the proposed heuristic is very close to the maximum AUC over all choices of the number of neighbors $[k_1,k_2]$. 

\section{Conclusion}
\label{con}
In this paper we proposed a method for similarity-based anomaly detection using a novel multi-criteria dissimilarity measure, the Pareto depth. 
The proposed method utilizes the notion of Pareto optimality to detect anomalies under multiple criteria by examining the Pareto depths of dyads corresponding to a 
test sample. 
Dyads corresponding to an anomalous sample tended to be located at deeper fronts 
compared to dyads corresponding to a nominal sample. 
Instead of choosing a specific weighting or performing a grid search on the 
weights for dissimilarity measures corresponding to different criteria, the proposed method can 
efficiently detect anomalies in a manner that is tractable for a large 
number of criteria. 
Furthermore, the proposed Pareto depth analysis (PDA) approach is provably better than using linear combinations of criteria. Numerical studies validated our theoretical predictions of PDA's performance advantages compared to using linear combinations on simulated and real data.

An interesting avenue for future work is to extend the PDA approach to extremely 
large data sets using approximate, rather than exact, Pareto fronts. 
In addition to the skyline algorithms from the information retrieval community 
that focus on approximating the first Pareto front, there has been recent work on 
approximating Pareto fronts using partial differential equations \cite{Calder2013} 
that may be applicable to the multi-criteria anomaly detection problem. 

\appendix
The proofs for Theorems \ref{thm:pareto-asymp} and \ref{thm:small} are presented after some preliminary results. 
We begin with a general result on the expectation of $K_n$.  Let $F: [0,1]^d\to \R$ denote the cumulative distribution function of $f$, defined by
\[F(x) = \int_0^{x_1} \cdots \int_0^{x_d} f(y_1,\dots,y_d) \, dy_1 \cdots dy_d.\]
\begin{proposition}\label{prop:num_pts}
For any $n\geq 1$ we have
\begin{equation*}\label{eq:num_pts}
E(K_n) = n \int_{[0,1]^d} f(x) \left(1-F(x)\right)^{n-1} dx.
\end{equation*}
\end{proposition}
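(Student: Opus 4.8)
The plan is to write $K_n$ as a sum of indicator variables and exploit the \iid~assumption together with linearity of expectation. First I would observe that
\[K_n = \sum_{i=1}^n \mathbf{1}\{Y_i \in \F^n\},\]
where $\{Y_i \in \F^n\}$ is the event that $Y_i$ is not strictly dominated by any of the remaining $n-1$ points. Since $Y_1,\dots,Y_n$ are identically distributed, each indicator has the same expectation, and therefore $E(K_n) = n\,P(Y_1 \in \F^n)$. It remains only to compute the probability that a single point survives on the first front.

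To this end I would condition on the location of $Y_1$. By the definition of strict dominance in Section \ref{prealg}, a fixed point $Y_j$ (with $j \neq 1$) strictly dominates $Y_1$ exactly when $Y_j$ lies in the box $\prod_{l=1}^d [0,(Y_1)_l]$ and differs from $Y_1$ in at least one coordinate. Because $f$ is a genuine density, the distribution of $Y_j$ is absolutely continuous, so the probability that $Y_j$ coincides with any given value in a fixed coordinate is zero. Hence, conditional on $Y_1 = x$, the distinction between weak and strict domination is immaterial, and
\[P\big(Y_j \succ Y_1 \mid Y_1 = x\big) = P\Big(Y_j \in \textstyle\prod_{l=1}^d [0,x_l]\Big) = F(x).\]
Equivalently, the probability that $Y_j$ fails to dominate $Y_1$ is $1-F(x)$.

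Next I would use independence. Conditional on $Y_1 = x$, the events that each of $Y_2,\dots,Y_n$ fails to dominate $Y_1$ are independent, each of probability $1-F(x)$, so the conditional probability that $Y_1$ is non-dominated is $(1-F(x))^{n-1}$. Integrating against the law of $Y_1$, whose density is $f$, yields
\[P(Y_1 \in \F^n) = \int_{[0,1]^d} f(x)\,(1-F(x))^{n-1}\,dx,\]
and multiplying by $n$ gives the claimed identity.

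This is essentially a direct expected-value calculation, so I do not anticipate a serious obstacle. The only point requiring care is the tie-handling argument: one must invoke absolute continuity of the distribution to guarantee that coordinatewise equalities occur with probability zero, so that the strict-dominance event can be replaced by the closed lower box whose probability is exactly $F(x)$. Once this measure-zero subtlety is dispatched, the rest follows from linearity of expectation and independence.
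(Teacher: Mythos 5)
Your proposal is correct and follows essentially the same route as the paper: decompose $K_n$ into indicators, use symmetry to reduce to $n\,P(Y_1 \in \F^n)$, then condition on $Y_1 = x$ and invoke independence to get $(1-F(x))^{n-1}$. The paper simply states $P(E_1 \mid Y_1 = x) = (1-F(x))^{n-1}$ without spelling out the measure-zero tie-handling argument, which you supply explicitly; that detail is a harmless elaboration, not a divergence.
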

\begin{proof}
Let $E_i$ be the event that $Y_i \in \F^n$ and let $\chi_{E_i}$ be indicator random variables for $E_i$.  Then
\[E(K_n) = E\left(\sum_{i=1}^n \chi_{E_i}\right) = \sum_{i=1}^n P(E_i) = nP(E_1).\]
Conditioning on $Y_1$ we obtain
\[E(K_n) =n \int_{[0,1]^d} f(x) P(E_1 \, | \, Y_1=x) dx.\]
Noting that $P(E_1 \, | \, Y_1=x) =\left(1-F(x)\right)^{n-1}$ completes the proof.
\end{proof}

The following simple proposition is essential in the proofs of Theorem \ref{thm:pareto-asymp} and \ref{thm:small}.
\begin{proposition}\label{prop:prelim-asymp}
Let $0 < \delta \leq 1$ and $a > 0$.  For $a \leq \delta^{-d}$ we have
\begin{equation}\label{eq:prelim-asymp-a}
n\int_{[0,\delta]^d} (1-a x_1 \cdots x_d)^{n-1} \, dx = \frac{c_{n,d}}{a} + O((\log n)^{d-2}), 
\end{equation}
and for $a \leq 1$ we have
\begin{equation}\label{eq:prelim-asymp-b}
n \int_{[0,1]^d\setminus [0,\delta]^d} (1-ax_1\cdots x_d)^{n-1} \, dx = O((\log n)^{d-2}).
\end{equation}
\end{proposition}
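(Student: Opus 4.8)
The plan is to convert both $d$-dimensional box integrals into one-dimensional integrals and then reduce everything to a single elementary asymptotic estimate. The main tool is the product-density identity: if $U_1,\dots,U_m$ are i.i.d.\ uniform on $[0,1]$, then $U_1\cdots U_m$ has density $(-\log p)^{m-1}/(m-1)!$ on $(0,1)$, so for any integrable $h$,
\[\int_{[0,1]^m} h(x_1\cdots x_m)\,dx = \frac{1}{(m-1)!}\int_0^1 h(p)(-\log p)^{m-1}\,dp.\]
I would prove this by induction on $m$ via the convolution formula for the density of a product. The second ingredient is the one-dimensional estimate
\[J_k(n) := n\int_0^1 (1-q)^{n-1}(-\log q)^k\,dq = (\log n)^k + O\!\left((\log n)^{k-1}\right),\]
valid for each fixed integer $k\ge 0$. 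To establish it I would start from $\int_0^1 (1-q)^{n-1}q^x\,dq = B(x+1,n) = \Gamma(x+1)\Gamma(n)/\Gamma(x+1+n)$, differentiate $k$ times in $x$ (which pulls down $k$ factors of $\log q$), and evaluate at $x=0$. Writing $B(x+1,n)=e^{\phi(x)}$ with $\phi=\log B$, the only derivative of $\phi$ that grows with $n$ is $\phi'(0)=\psi(1)-\psi(1+n) = -\log n + O(1)$, while $\phi^{(j)}(0)=O(1)$ for $j\ge 2$ and $e^{\phi(0)} = B(1,n) = 1/n$; hence the top term $e^{\phi(0)}(\phi'(0))^k$ dominates and yields $J_k(n)$ as claimed.

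For part (a) I would first rescale $x_i = \delta y_i$, which maps $[0,\delta]^d$ onto $[0,1]^d$ and turns the integral into $\delta^d\, n\int_{[0,1]^d}(1-b\,y_1\cdots y_d)^{n-1}dy$ with $b:=a\delta^d$; the hypothesis $a\le \delta^{-d}$ is exactly $b\le 1$, so $1-b\,y_1\cdots y_d\in[0,1]$. Applying the product-density identity gives $\delta^d\frac{n}{(d-1)!}\int_0^1 (1-bp)^{n-1}(-\log p)^{d-1}dp$. Substituting $q=bp$ and expanding $(-\log(q/b))^{d-1}=(-\log q+\log b)^{d-1}$ binomially splits this into a sum over $k=0,\dots,d-1$ of $\binom{d-1}{k}(\log b)^{d-1-k}$ times $\frac{1}{b}\frac{1}{(d-1)!}\,n\int_0^b(1-q)^{n-1}(-\log q)^k dq$. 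Because $(1-q)^{n-1}$ is exponentially small on $[b,1]$, each such integral agrees with $J_k(n)$ up to an exponentially small error, so only the $k=d-1$ term survives to leading order, contributing $\frac{1}{b}c_{n,d}$, while every $k<d-1$ term is $O((\log n)^{d-2})$. Multiplying by $\delta^d$ and using $\delta^d/b=1/a$ yields the claimed $c_{n,d}/a + O((\log n)^{d-2})$.

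For part (b) I would note that $[0,1]^d\setminus[0,\delta]^d=\bigcup_{i=1}^d\{x\in[0,1]^d: x_i>\delta\}$, so by nonnegativity of the integrand and its symmetry in the coordinates the integral is at most $d$ times the integral over $\{x_d>\delta\}$. On that region $x_1\cdots x_d\ge \delta\,x_1\cdots x_{d-1}$, and since $a\le 1$ keeps all bases in $[0,1]$, monotonicity of $t\mapsto t^{n-1}$ gives $(1-a\,x_1\cdots x_d)^{n-1}\le (1-a\delta\,x_1\cdots x_{d-1})^{n-1}$. The right-hand side is independent of $x_d$, so integrating $x_d$ over $[\delta,1]$ costs only a factor $1-\delta$ and leaves a $(d-1)$-dimensional integral of exactly the same shape with product weight $a\delta\le 1$. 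By the product-density identity and the estimate $J_{d-2}(n)=O((\log n)^{d-2})$, this is $O((\log n)^{d-2})$, proving the bound.

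The main obstacle will be the bookkeeping in part (a): one must verify that, after the binomial expansion, the single top-order term produces a coefficient of exactly $1/b$ (hence $1/a$ after the $\delta^d$ factor), and that every lower term, together with the exponentially small tails over $[b,1]$, is genuinely absorbed into $O((\log n)^{d-2})$. Obtaining $J_k(n)$ to the stated precision—leading term plus one order of correction—is precisely what makes these error terms line up; once the product-density identity and $J_k$ are in hand, the rest of the argument is elementary.
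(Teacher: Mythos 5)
Your proposal is correct and follows essentially the same route as the paper: both reduce the box integral to the one-dimensional integral $\frac{n}{(d-1)!}\int_0^1(1-bt)^{n-1}(-\log t)^{d-1}dt$ (your product-of-uniforms density identity is exactly the paper's iterated change of variables $t=x_1\cdots x_d$), after the same rescaling $x=\delta y$. The only cosmetic differences are that you extract the one-dimensional asymptotics by differentiating the Beta function rather than by the substitution $u=nat$, and you prove part (b) by a slab decomposition and monotonicity rather than by subtracting the two instances of part (a) with $\delta=1$ and $\delta<1$; both variants are equally valid.
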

\begin{proof}
We will give a sketch of the proof as similar results are well-known \citep{bai2005}.  Assume $\delta =1$ and let $Q_n$ denote the quantity on the left hand side of \eqref{eq:prelim-asymp-a}.  Making the change of variables $y_i = x_i$ for $i=1,\dots,d-1$ and $t=x_1\cdots x_d$, we see that
\begin{align*}
Q_n &= n\int_0^1 \int_{t}^1 \int_{\frac{t}{y_{d-1}}}^1\\
&\hspace{2cm}\cdots \int_{\frac{t}{y_2 \cdots y_{d-1}}}^1 \frac{(1-at)^{n-1}}{y_1 \cdots y_{d-1}}  \, dy_1 \cdots dy_{d-1} dt.
\end{align*}
By computing the inner $d-1$ integrals we find that
\[Q_n = \frac{n}{(d-1)!} \int_0^1 (-\log t)^{d-1} (1-at)^{n-1} dt,\]
from which the asymptotics \eqref{eq:prelim-asymp-a} can be easily obtained by another change of variables $u=nat$, provided $a\leq1$.  For $0 < \delta < 1$, we make the change of variables $y=x/\delta$ to find that
\[Q_n = \delta^d  n \int_{[0,1]^d}  (1-a\delta^d y_1\cdots y_d)^{n-1} \, dy.\]
We can now apply the above result provided $a\delta^d \leq 1$.  The asymptotics in \eqref{eq:prelim-asymp-a} show that
\begin{align*}
&n\int_{[0,1]^d} (1-a x_1 \cdots x_d)^{n-1} \, dx \\
&\hspace{1cm}=  n\int_{[0,\delta]^d} (1-a x_1 \cdots x_d)^{n-1} \, dx  + O((\log n)^{d-2}),
\end{align*}
when $a \leq 1$, which gives the second result \eqref{eq:prelim-asymp-b}.
\end{proof}

We now give the proof of Theorem \ref{thm:pareto-asymp}.

\begin{proof}
Let $\eps > 0$ and choose $\delta >0$ such that
\begin{equation*}\label{eq:smoothness-f}
f(0)-\eps \leq f(x) \leq f(0) + \eps  \ \ \text{ for any }  x \in [0,\delta]^d,
\end{equation*}
and $f(0) < \delta^{-d}$.
Since $\sigma \leq f \leq M$, we have that $F(x) \geq \sigma x_1\cdots x_d$ for all $x \in [0,1]^d$.  Since $f$ is a probability density on $[0,1]^d$, we must have $\sigma \leq 1$.  Since $\sigma > 0$, we can apply Proposition \ref{prop:prelim-asymp} to find that
\begin{align}\label{eq:B}
&n \int_{[0,1]^d \setminus [0,\delta]^d} f(x) (1-F(x))^{n-1} \, dx \notag \\
&\hspace{2.5cm}\leq Mn\int_{[0,1]^d\setminus [0,\delta]^d} (1-\sigma x_1\cdots x_d)^{n-1} \, dx  \notag\\
&\hspace{2.5cm}{}={} O((\log n)^{d-2}).
\end{align}
For $x \in [0,\delta]^d$, we have 
\[(f(0) - \eps) x_1\cdots x_d \leq F(x) \leq (f(0)+\eps) x_1\cdots x_d.\]
Combining this with Proposition \ref{prop:prelim-asymp}, and the fact that $f(0) -\eps < \delta^{-d}$ we have
\begin{align}\label{eq:A}
&n \int_{[0,\delta]^d} f(x) (1-F(x))^{n-1} \, dx \notag\\
&\hspace{1cm}\leq (f(0)+\eps) n\int_{[0,\delta]^d} (1-(f(0)-\eps) x_1\cdots x_d)^{n-1} \, dx \notag \\
&\hspace{1cm}{}={} \frac{f(0)+\eps}{f(0)-\eps}\cdot  c_{n,d} + O((\log n)^{d-2}).
\end{align}
Combining \eqref{eq:B} and \eqref{eq:A} with Proposition \eqref{prop:num_pts} we have
\[E(K_n) \leq \frac{f(0)+\eps}{f(0)-\eps}\cdot  c_{n,d} + O((\log n)^{d-2}).\]
It follows that
\[\limsup_{n\to \infty} \, c_{n,d}^{-1}E(K_n) \leq \frac{f(0)+\eps}{f(0)-\eps}.\]
By a similar argument we can obtain
\[\liminf_{n\to \infty} \, c_{n,d}^{-1}E(K_n) \geq \frac{f(0)-\eps}{f(0)+\eps}.\]
Since $\eps>0$ was arbitrary, we see that
\[\lim_{n\to \infty}  \, c_{n,d}^{-1}E(K_n)  = 1. \qedhere\]
\end{proof}

The proof of Theorem \ref{thm:small} is split into the following two lemmas. 
It is well-known, and easy to see, that  $x \in  \L^n$ if and only if $x\in \F^n$,  and $x$ is on the boundary of the convex hull of $\G^n$ \citep{ehrgott2005}.  This fact will be used in the proof of Lemma \ref{lem:upper}.
\begin{lemma}\label{lem:upper}
Assume $f:[0,1]^d \to \R$ is continuous at the origin and there exists $\sigma,M>0$ such that $\sigma \leq f \leq M$. Then 
\begin{equation*}
E(L_n) \leq  \frac{3d-1}{4d-2} \cdot c_{n,d}+ o((\log n)^{d-1}) \ \ {\rm as} \ \ n \to \infty.
\end{equation*}
\end{lemma}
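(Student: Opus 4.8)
The plan is to bound $E(L_n)$ from above by bounding $E(K_n-L_n)$ from below and then invoking Theorem \ref{thm:pareto-asymp}. Since that theorem gives $E(K_n) = c_{n,d} + o((\log n)^{d-1})$ and $1 - \frac{3d-1}{4d-2} = \frac{d-1}{4d-2}$, the asserted bound follows once we show
\[E(K_n - L_n) \ge \tfrac{d-1}{4d-2}\, c_{n,d} + o((\log n)^{d-1}) \quad \text{as } n\to\infty.\]
I would therefore work entirely with the Pareto-optimal points that are \emph{not} extreme points of $\mathrm{conv}(\G^n)$, using the characterization recalled just before the lemma: $x \in \L^n$ if and only if $x \in \F^n$ and $x$ lies on the boundary of $\mathrm{conv}(\G^n)$.

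The geometric engine is a cheap certificate of non-extremality. If $x \in \F^n$ and there is a pair of sample points $Y_i,Y_j$ with $\lambda Y_i + (1-\lambda)Y_j \prec x$ (strictly, in every coordinate) for some $\lambda\in[0,1]$, then $x$ lies in the interior of $\mathrm{conv}(\G^n)$, so $x \notin \L^n$. Because $x \in \F^n$ forces $Y_i,Y_j \notin B_x := (x-\R^d_+)\cap[0,1]^d$, such a pair must \emph{straddle} the box, with the segment $\overline{Y_iY_j}$ piercing it. Writing $Z_n$ for the number of $x \in \F^n$ admitting at least one straddling pair, we have $K_n - L_n \ge Z_n$, and it suffices to prove $E(Z_n) \ge \frac{d-1}{4d-2}c_{n,d} + o((\log n)^{d-1})$.

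I would estimate $E(Z_n)$ by a first-moment computation. Taking $Y_1=x$ as the candidate point, $\{Y_2,Y_3\}$ as a straddling pair, and requiring the remaining $n-3$ points to avoid $B_x$ (so that $x\in\F^n$), linearity of expectation gives, up to the existence-versus-count correction discussed below,
\[E(Z_n) \approx n\binom{n-1}{2}\!\int f(x)\!\left[\int\!\!\int f(a)f(b)\,\mathbf{1}_{\mathrm{pierce}}(x,a,b)\,da\,db\right]\!(1-F(x))^{n-3}\,dx,\]
where $\mathbf{1}_{\mathrm{pierce}}$ indicates that $a,b\notin B_x$ and $\overline{ab}$ pierces $B_x$. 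As in the proof of Theorem \ref{thm:pareto-asymp}, the integral concentrates near the origin, where $F(x)\approx f(0)\,x_1\cdots x_d$; after rescaling the box to unit content the inner bracket becomes a purely geometric quantity of order $n^{-2}$, the prefactor $\binom{n-1}{2}$ absorbs this $n^{-2}$, the density value $f(0)$ cancels exactly as in Theorem \ref{thm:pareto-asymp}, and Proposition \ref{prop:prelim-asymp} converts the surviving $n\int f(x)(1-f(0)x_1\cdots x_d)^{n-3}\,dx$ into a constant multiple of $c_{n,d}$.

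The hard part is twofold. First, the geometric constant must be evaluated exactly and shown to equal $\frac{d-1}{4d-2}$: this reduces to integrating, over the relative positions of the straddling pair and the weight $\lambda$, the probability that $\overline{ab}$ pierces the rescaled orthant box, and it is this configuration integral that produces both the $d-1$ and the $4d-2$. Second, $Z_n$ counts \emph{points}, not (point, pair) triples, so the displayed first moment over-counts any $x$ possessing several straddling pairs; to obtain a genuine lower bound on $E(Z_n)$ I would supplement it with a second-moment (Bonferroni / Paley--Zygmund) estimate showing that the expected number of straddling pairs per Pareto point stays bounded, so that multiply-certified points contribute only at the $o((\log n)^{d-1})$ level. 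Controlling this second moment — which entails the correlations between the empty-box event and two overlapping straddle events — is the main technical obstacle, while the sharp evaluation of the single configuration integral is the main computational one.
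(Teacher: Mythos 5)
Your opening reduction (bound $E(K_n-L_n)$ from below and subtract from Theorem \ref{thm:pareto-asymp}) is equivalent to what the paper does, and your geometric certificate --- a pair $Y_i,Y_j$ some convex combination of which strictly dominates $x$ forces $x$ into the interior of the convex hull of $\G^n$, hence $x\notin\L^n$ --- is exactly the mechanism in the paper's proof of Lemma \ref{lem:upper}. But the way you propose to count this certificate has two genuine gaps, and they are not the deferrable technicalities you describe. First, the constant: you assert that the configuration integral over \emph{all} straddling pairs "produces both the $d-1$ and the $4d-2$," but nothing supports that. In the paper, $\frac{3d-1}{4d-2}$ does not arise as the exact asymptotic probability of the full straddling event; it arises from a deliberately crude sufficient condition. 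The paper fixes $d$ explicit disjoint wedge-shaped regions $B_1,\dots,B_d$ adjacent to the corner at $x$, each of measure exactly $\frac{1}{d}x_1\cdots x_d$, shows that occupancy of any \emph{two} of them produces a straddling pair, and therefore bounds $P(Y_1\in\L^n\mid Y_1=x)$ by the probability that at most one $B_i$ is occupied and the dominated box is empty. That complement probability is a short inclusion--exclusion over empty-region events, and Proposition \ref{prop:prelim-asymp} converts it into $\bigl(\frac{d^2}{2d-1}-\frac{d-1}{2}\bigr)c_{n,d}=\frac{3d-1}{4d-2}\,c_{n,d}$. The exact probability of "some straddling pair exists" is never computed --- indeed the true limit of $c_{n,d}^{-1}E(L_n)$ is left open between $\frac{d!}{d^d}$ and $\frac{3d-1}{4d-2}$ --- so your plan requires evaluating an integral the paper carefully avoids, and there is no reason its value matches the target constant.

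Second, the second-moment step you postpone is where your argument would actually break, because the premise that "the expected number of straddling pairs per Pareto point stays bounded" is false. For $x$ near the front the dominated box has volume $x_1\cdots x_d\sim 1/n$, but the set of pairs $(a,b)$ lying outside the box with $\overline{ab}$ piercing it has measure governed by the box's cross-sections, of order $n^{-(d-1)/d}$ rather than $n^{-2}$; multiplied by $\binom{n-1}{2}$ this diverges. This is consistent with the straddling event being non-rare: a constant fraction of Pareto points fail to lie in $\L^n$. When the certifying event has probability bounded away from $0$ and the certificate count diverges, a first-moment count with a Paley--Zygmund or Bonferroni correction is the wrong tool, and bounding the probability of the \emph{complement} directly is what is needed --- which is precisely what the paper's "at most one of $B_1,\dots,B_d$ occupied" computation accomplishes. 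The missing idea, concretely, is the replacement of "there exists a straddling pair somewhere" by "at least two of $d$ fixed, disjoint, equal-measure witness regions are occupied."
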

\begin{proof}
Let $\eps > 0$ and choose $0 < \delta<\frac{1}{2}$ so that 
\begin{equation}\label{eq:smoothness-f2}
f(0) - \eps \leq f(x) \leq f(0) + \eps \ \ \text{for any } x \in [0,2\delta]^d,
\end{equation}
and $3f(0) \leq \delta^{-d}$.
As in the proof of Proposition \ref{prop:num_pts} we have $E(L_n) = nP(Y_1 \in \L^n)$, so conditioning on $Y_1$ we have
\[E(L_n) = n \int_{[0,1]^d} f(x) P(Y_1 \in \L^n \, | \, Y_1=x) \, dx.\]
As in the proof of Theorem \ref{thm:pareto-asymp}, we have
\begin{align*}
&n\int_{[0,1]^d\setminus [0,\delta]^d} f(x)P(Y_1 \in \L^n \, | \, Y_1 = x) \, dx \\
&\hspace{2.5cm}\leq n\int_{[0,1]^d \setminus [0,\delta]^d} f(x) (1-F(x))^{n-1} \, dx  \\
&\hspace{2.5cm}{}={} O((\log n)^{d-2}),
\end{align*}
and hence
\begin{align}\label{eq:exp-L}
E(L_n) &= n\int_{[0,\delta]^d}f(x) P(Y_1 \in \L^n \, | \, Y_1 = x)\, dx \notag \\
&\hspace{4cm}+ O((\log n)^{d-2}).
\end{align}
Fix $x \in [0,\delta]^d$ and define $A = \{y \in [0,1]^d \, : \, \forall i, \ y_i \leq x_i\}$
and
\begin{align*}
B_i &= \bigg\{y \in [0,1]^d \ \ : \ \ \forall j\neq i, \ y_j < x_j \\
&\hspace{4cm} \text{and} \ \  x_i < y_i < 2x_i - \frac{x_i}{x_j} y_j\bigg\}, 
\end{align*}
for $i=1,\dots,d$, and note that $B_i  \subset [0,2\delta]^d$ for all $i$. 
\begin{figure}[!t]
\centering
\includegraphics[width=0.4\textwidth]{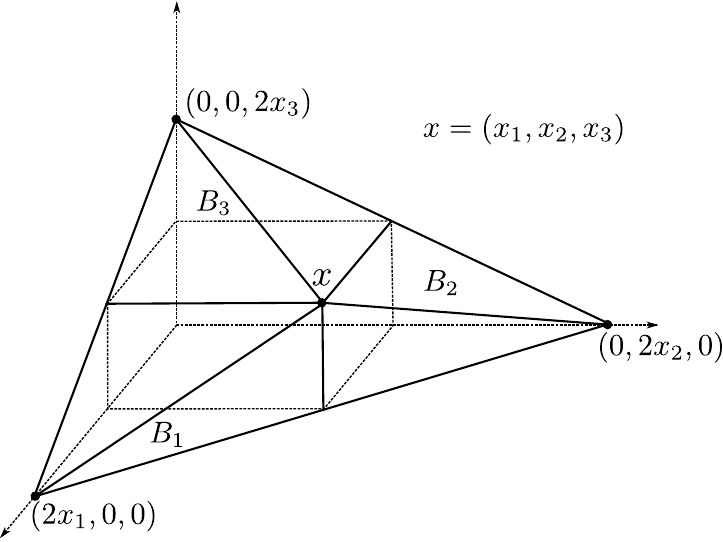}
\caption{Depiction of the sets $B_1,B_2$ and $B_3$ from the proof of Lemma \ref{lem:upper} in the case that $d=3$.}
\label{fig:demo}
\end{figure}
See Fig.~\ref{fig:demo} for an illustration of these sets for $d=3$. 

We claim that if at least two of $B_1,\dots,B_d$ contain samples from $Y_2,\dots,Y_n$, and $Y_1=x$, then $Y_1 \not\in \L^n$.    
To see this, assume without loss of generality that $B_1$ and $B_2$ are nonempty and let $y \in B_1$ and $z \in B_2$. Set 
\[\tilde{y} = \left(y_1, 2x_2 - \frac{x_2}{x_1} y_1, x_3,\dots,x_d \right)\]
\[\tilde{z} = \left(2x_1 - \frac{x_1}{x_2}z_2,z_2,x_3,\dots,x_d \right).\]
By the definitions of $B_1$ and $B_2$ we see that $y_i \leq \tilde{y}_i$ and $z_i \leq \tilde{z}_i$ for all $i$, hence $\tilde{y},\tilde{z} \in \G_n$.  Let $\alpha \in (0,1)$ such that 
\[\alpha y_1 + (1-\alpha) \left(2x_1 - \frac{x_1}{x_2} z_2\right) = x_1.\]
A short calculation shows that $x = \alpha \tilde{y} + (1-\alpha)\tilde{z}$ which implies that $x$ is in the interior of the convex hull of $\G_n$, proving the claim.  

Let $E$ denote the event that at most one of $B_1,\dots,B_d$ contains a sample from $Y_2,\dots,Y_n$, and let $F$ denote the event that $A$ contains no samples from $Y_2,\dots,Y_n$.   Then by the observation above we have
\begin{equation}\label{eq:bounded-event}
P(Y_1 \in \L^n \, | \, Y_1=x) \leq P(E\cap F \, | \, Y_1=x) = P(E\cap F).
\end{equation}
For $i=1,\dots,d$, let $E_i$ denote the event that $B_i$ contains no samples from $Y_2,\dots,Y_n$.  It is not hard to see that 
\[E = \bigcup_{i=1}^d \left(\bigcap_{j\neq i} E_j \setminus \bigcap_j E_j\right) \bigcup \left(\bigcap_j E_j\right).\]
Furthermore, the events in the unions above are mutually exclusive (disjoint) and $\cap_j E_j \subset \cap_{j\neq i} E_j$ for $i=1,\dots,d$. It follows that
\begin{align}\label{eq:temp}
&P(E\cap F)\notag \\
&{}={} \sum_{i=1}^d \left(P\left(\cap_{j\neq i} E_j\cap F\right) - P\left(\cap_j E_j \cap F\right) \right) + P\left(\cap_j E_j\cap F\right) \notag \\
&{}={} \sum_{i=1}^d P\left(\cap_{j\neq i} E_j\cap F\right)  - (d-1) P\left(\cap_j E_j\cap F\right) \notag \\
&{}={} \sum_{i=1}^d \left(1-F(x) - \int_{\cup_{j\neq i} B_j}\hspace{-7mm} f(y) \, dy\right)^{n-1}\notag \\
&\hspace{2cm} - (d-1) \left(1-F(x) - \int_{\cup_j B_j}\hspace{-3mm} f(y) \, dy\right)^{n-1}\hspace{-5mm}.
\end{align}
A simple computation shows that $|B_j| =\frac{1}{d} x_1\cdots x_d$ for $j=1,\dots,d$.  Since $A,B_i \subset [0,2\delta]^d$, we have by \eqref{eq:smoothness-f2} that 
\[(f(0) - \eps)x_1\cdots x_d \leq F(x) \leq (f(0)+\eps) x_1\cdots x_d,\]
and
\[\frac{1}{d} (f(0) - \eps) x_1\cdots x_d \leq \int_{B_j} f(y) \, dy \leq \frac{1}{d} (f(0)+\eps) x_1\cdots x_d.\]
Inserting these into \eqref{eq:temp} and combining with \eqref{eq:bounded-event} we have
\begin{align*}
&P(Y_1 \in \L^n \, | \, Y_1=x) \\
&\hspace{1cm}\leq d\left(1 - \frac{2d-1}{d}(f(0)-\eps) x_1\cdots x_d\right)^{n-1}  \\
&\hspace{2.5cm} - (d-1) \left( 1- 2(f(0)+\eps) x_1\cdots x_d\right)^{n-1}.
\end{align*}
We can now insert this into \eqref{eq:exp-L} and apply Proposition \ref{prop:prelim-asymp} (since $3f(0)\leq \delta^{-d}$) to obtain
\begin{align*}
E(L_n) &\leq \left(\frac{d^2}{2d-1} \frac{f(0)+\eps}{f(0) - \eps} - \frac{d-1}{2} \frac{f(0)-\eps}{f(0)+\eps} \right) c_{n,d}\\
&\hspace{5cm} + O((\log n)^{d-2}).
\end{align*}
Since $\eps >0$ was arbitrary, we find that
\[\limsup_{n \to \infty} \  c_{n,d}^{-1} E(L_n) \leq \left(\frac{d^2}{2d-1}  - \frac{d-1}{2}  \right) = \frac{3d-1}{4d-2}.\qedhere\] 
\end{proof}
\begin{lemma}\label{lem:lower}
Assume $f:[0,1]^d \to \R$ is continuous and there exists $\sigma,M>0$ such that $\sigma \leq f \leq M$.  Then 
\begin{equation*}
E(L_n) \geq  \frac{d!}{d^d} \cdot c_{n,d} + o((\log n)^{d-1}) \ \ {\rm as} \ \ n \to \infty.
\end{equation*}
\end{lemma}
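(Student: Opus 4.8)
The plan is to establish the lower bound by exhibiting a simple, explicit sufficient condition for a point to lie in $\L^n$ and estimating its probability, mirroring the conditioning argument used for Proposition \ref{prop:num_pts} and Lemma \ref{lem:upper}. I would start from the identity $E(L_n) = n\,P(Y_1 \in \L^n)$ and condition on $Y_1 = x$, giving
\[E(L_n) = n\int_{[0,1]^d} f(x)\, P(Y_1 \in \L^n \mid Y_1 = x)\, dx.\]
Because the integrand is nonnegative, for a lower bound it suffices to integrate over the small cube $[0,\delta]^d$, where $\delta$ is chosen (using continuity of $f$ at the origin) so that $f(0)-\eps \le f(x) \le f(0)+\eps$ on $[0,2\delta]^d$, with $\delta \le \frac1d$ and $\delta$ small enough that the quantity $a$ below satisfies $a \le \delta^{-d}$. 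The contribution from $[0,1]^d \setminus [0,\delta]^d$ is simply discarded.

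For fixed $x \in [0,\delta]^d$ I would use the strictly positive weight vector $\alpha = (1/x_1,\dots,1/x_d)\in \R^d_+$ and define the simplex
\[H_x = \Big\{ y \in \R^d_+ \ : \ \textstyle\sum_{i=1}^d \tfrac{y_i}{x_i} \le d \Big\},\]
whose bounding hyperplane passes through $x$. The key geometric observation is that if none of $Y_2,\dots,Y_n$ lands in $H_x$, then every other sample $y$ satisfies $\sum_i y_i/x_i \ge d = \sum_i x_i/x_i$, so $x$ belongs to $\argmin_{y\in S_n}\{\sum_i y_i/x_i\}$ and hence $x\in\L^n$ directly by the definition of $\L^n$ (and it is automatically Pareto-optimal, being a minimizer of a strictly positive linear functional). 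This gives
\[P(Y_1 \in \L^n \mid Y_1 = x) \ge \big(1 - \mu(H_x)\big)^{n-1}, \qquad \mu(H_x) := \int_{H_x} f(y)\,dy.\]

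Next I would compute the volume. The simplex $H_x$ has vertices at the origin and at the points $d x_i e_i$; since $\delta \le \frac1d$ it is contained in $[0,d\delta]^d \subseteq [0,2\delta]^d$, so there is no clipping against $[0,1]^d$ and $f \le f(0)+\eps$ throughout. The change of variables $z_i = y_i/(d x_i)$ reduces $H_x$ to the standard simplex and yields $|H_x| = \frac{d^d}{d!}\,x_1\cdots x_d$, whence $\mu(H_x) \le (f(0)+\eps)\frac{d^d}{d!}\,x_1\cdots x_d$. Using $f(x) \ge f(0)-\eps$ and applying Proposition \ref{prop:prelim-asymp} with $a = (f(0)+\eps)\frac{d^d}{d!}$ then gives
\[E(L_n) \ge (f(0)-\eps)\, n\!\int_{[0,\delta]^d}\!\big(1 - a\,x_1\cdots x_d\big)^{n-1} dx = \frac{f(0)-\eps}{f(0)+\eps}\cdot \frac{d!}{d^d}\, c_{n,d} + O\big((\log n)^{d-2}\big).\]
Taking $\liminf_{n\to\infty} c_{n,d}^{-1}E(L_n)$ and letting $\eps \downarrow 0$ yields the constant $\frac{d!}{d^d}$, and since $O((\log n)^{d-2}) = o((\log n)^{d-1})$ this completes the proof.

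The main obstacle is the geometric step: identifying the precise region whose emptiness certifies membership in $\L^n$ (not merely in $\F^n$) and whose volume is exactly $\frac{d^d}{d!}\,x_1\cdots x_d$, so that the constant $\frac{d!}{d^d}$ emerges. The choice $\alpha_i = 1/x_i$ is what makes $H_x$ a simplex symmetric in the coordinates and, crucially, keeps it inside $[0,2\delta]^d$ uniformly over $x\in[0,\delta]^d$, so that the estimate $\mu(H_x)\le (f(0)+\eps)|H_x|$ is legitimate. Once this region is correctly pinned down, the remainder is a routine repetition of the conditioning-and-Proposition-\ref{prop:prelim-asymp} machinery already employed in Theorem \ref{thm:pareto-asymp} and Lemma \ref{lem:upper}.
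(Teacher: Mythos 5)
Your proof is correct and follows essentially the same route as the paper's: the set $H_x$ is exactly the paper's simplex $A=\{y : y\cdot\nu \le x\cdot\nu\}$ with $\nu=(1/x_1,\dots,1/x_d)$, the volume computation $|A|=\frac{d^d}{d!}x_1\cdots x_d$ is identical, and the conclusion via Proposition \ref{prop:prelim-asymp} matches. One small slip: for $x\in[0,\delta]^d$ the simplex has vertices at $dx_ie_i$ and so is contained in $[0,d\delta]^d$, \emph{not} in $[0,2\delta]^d$ when $d\ge 3$, so you should impose the continuity bound $f(0)-\eps\le f\le f(0)+\eps$ on $[0,d\delta]^d$ (as the paper does) rather than on $[0,2\delta]^d$.
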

\begin{proof}
Let $\eps > 0$ and choose $0 < \delta < 1/d$ so that 
\begin{equation}\label{eq:smoothness-f3}
f(0) - \eps \leq f(x) \leq f(0) + \eps \ \ {\rm for} \ \ x \in [0,d\delta]^d,
\end{equation}
and
\begin{equation}\label{eq:bound}
\frac{d^d}{d!} (f(0) + \eps) \leq \delta^{-d}.
\end{equation}
As in the proof of Lemma \ref{lem:upper} we have
\begin{align}\label{eq:exp-L2}
E(L_n) &{}={} n\int_{[0,\delta]^d}f(x) P(Y_1 \in \L^n \, | \, Y_1 = x)\, dx \notag \\
&\hspace{4cm}+ O((\log n)^{d-2}).
\end{align}
Fix $x \in (0,\delta)^d$, set $\nu = \left(\frac{1}{x_1},\dots,\frac{1}{x_d}\right)$ and 
\[A = \big\{ y \in [0,1]^d \ \ | \ \ y\cdot \nu \leq x \cdot \nu\big\}.\]
Note that $A$ is a simplex with an orthogonal corner at the origin and side lengths $d\cdot x_1,\dots,d\cdot x_d$. A simple computation shows that $|A| = \frac{d^d}{d!} x_1\cdots x_d$.  By \eqref{eq:smoothness-f3} we have
\[\int_A f(y) \, dy \leq  (f(0)+\eps) |A| = \frac{d^d}{d!}(f(0)+\eps)x_1\cdots x_d.\] 
It is easy to see that if $A$ is empty and $Y_1=x$ then $Y_1 \in \L^n$, hence
\begin{align*}
P(Y_1 \in \L^n \, | \, Y_1=x) &{}\geq{} \left( 1- \int_{A} f(y)\, dy\right)^{n-1} \\
&{}\geq{} \left( 1 - \frac{d^d}{d!} (f(0)+\eps) x_1\cdots x_d\right)^{n-1}.
\end{align*}
Inserting this into \eqref{eq:exp-L2} and noting \eqref{eq:bound}, we can apply Proposition \ref{prop:prelim-asymp} to obtain
\[E(L_n) \geq \frac{d!}{d^d}\frac{f(0)-\eps}{f(0)+\eps}c_{n,d}  + O((\log n)^{d-2}),\]
and hence
\[\limsup_{n\to \infty} \ c_{n,d}^{-1} E(L_n) \geq \frac{d!}{d^d}. \qedhere\]
\end{proof}

Theorem \ref{thm:small} is obtained by combining Lemmas \ref{lem:upper} and 
\ref{lem:lower}.

\bibliographystyle{IEEEtran}
\bibliography{ref}

\begin{IEEEbiography} 
[{\includegraphics[width=1in]{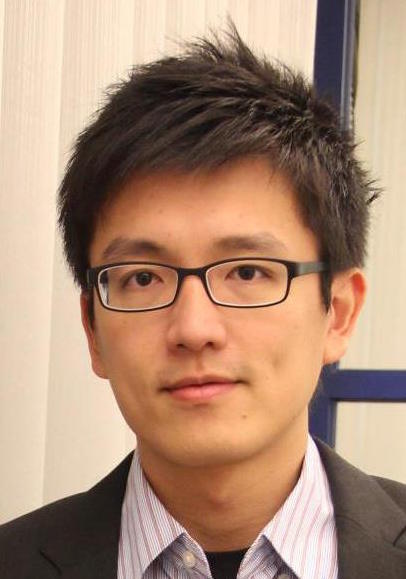}}]
{Ko-Jen Hsiao} obtained his Ph.D. in Electrical Engineering and Computer Science at the University of Michigan in 2014. He received the B.A.Sc.~degree in Electrical Engineering from the National Taiwan University in 2008 and the Master degrees in Electrical Engineering: Systems and Applied Mathematics from the University of Michigan in 2011. Ko-Jen Hsiao's main research interests are in statistical signal processing and machine learning with applications to anomaly detection, image retrieval, recommendation systems and time series analysis. He is currently a data scientist and an applied researcher in machine learning at WhisperText.
\end{IEEEbiography}

\begin{IEEEbiography} 
[{\includegraphics[width=1in]{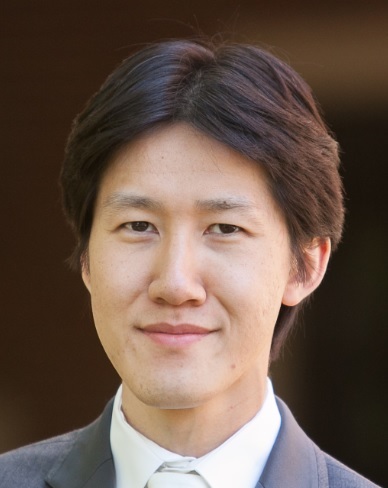}}]
{Kevin S. Xu} received the B.A.Sc. degree in Electrical Engineering from the University of Waterloo in 2007 and the M.S.E. and Ph.D. degrees in Electrical Engineering: Systems from the University of Mich-igan in 2009 and 2012, respectively. He was a recipient of the Natural Sciences and Engineering Re-search Council of Canada (NSERC) Postgraduate MasterÕs and Doctorate Scholarships. He is currently an assistant professor in the EECS Department at the University of Toledo and has previously held in-dustry research positions at Technicolor and 3M. His main research interests are in machine learning and statistical signal processing with applications to network science and human dynamics.
\end{IEEEbiography}

\begin{IEEEbiography} 
[{\includegraphics[width=1in]{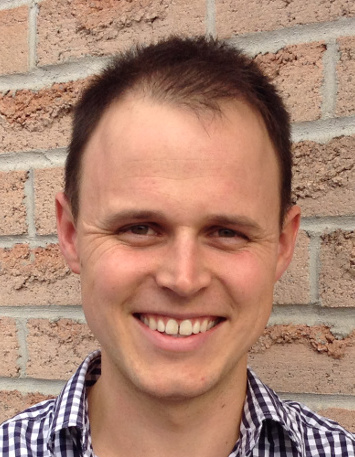}}]
{Jeff Calder} is currently a Morrey Assistant Professor of Mathematics with the University of California at Berkeley, Berkeley, CA, USA. He received the Ph.D. degree in applied and interdisciplinary mathematics from the University of Michigan, Ann Arbor, MI, USA, in 2014 under the supervision of Selim Esedoglu and Alfred Hero. His primary research interests are the analysis of partial differential equations, applied probability, and mathematical problems in computer vision and image processing. He received the M.Sc. degree in mathematics from QueenÕs University, Kingston, ON, Canada, under the supervision of A.-R. Mansouri, where he developed Sobolev gradient flow techniques for image diffusion and sharpening, and the B.Sc. degree in mathematics and engineering from QueenÕs University with a specialization in control and communications.

\end{IEEEbiography}

\begin{IEEEbiography}
[{\includegraphics[width=1in,height=1.25in]{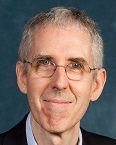}}]
{Alfred O.~Hero III} received the B.S. (summa cum laude) from Boston University (1980) and the Ph.D from Princeton University (1984), both in Electrical Engineering. Since 1984 he has been with the University of Michigan, Ann Arbor, where he is the R. Jamison and Betty Williams Professor of Engineering and co-director of the Michigan Institute for Data Science (MIDAS) . His primary appointment is in the Department of Electrical Engineering and Computer Science and he also has appointments, by courtesy, in the Department of Biomedical Engineering and the Department of Statistics. From 2008-2013 he held the Digiteo Chaire d'Excellence at the Ecole Superieure d'Electricite, Gif-sur-Yvette, France. He is a Fellow of the Institute of Electrical and Electronics Engineers (IEEE) and several of his research articles have received best paper awards. Alfred Hero was awarded the University of Michigan Distinguished Faculty Achievement Award (2011). He received the IEEE Signal Processing Society Meritorious Service Award (1998), the IEEE Third Millenium Medal (2000), and the IEEE Signal Processing Society Technical Achievement Award (2014). Alfred Hero was President of the IEEE Signal Processing Society (2006-2008) and was on the Board of Directors of the IEEE (2009-2011) where he served as Director of Division IX (Signals and Applications). He served on the IEEE TAB Nominations and Appointments Committee (2012-2014). Alfred Hero is currently a member of the Big Data Special Interest Group (SIG) of the IEEE Signal Processing Society. Since 2011 he has been a member of the Committee on Applied and Theoretical Statistics (CATS) of the US National Academies of Science.
Alfred Hero's recent research interests are in the data science of high dimensional spatio-temporal data, statistical signal processing, and machine learning. Of particular interest are applications to networks, including social networks, multi-modal sensing and tracking, database indexing and retrieval, imaging, biomedical signal processing, and biomolecular signal processing.
\end{IEEEbiography}

\end{document}